\newtheorem{theorem}{Theorem}
\newtheorem{lemma}{Lemma}
\title{From Sequential to Recursive: Enhancing Decision-Focused Learning with Bidirectional Feedback}
\author{
    Xinyu Wang,
    Jinxiao Du,
    Yiyang Peng,
    Wei Ma\thanks{Corresponding Author.}
}
\begin{document}

\maketitle

\begin{abstract}
Decision-focused learning (DFL) has emerged as a powerful end-to-end alternative to conventional predict-then-optimize (PTO) pipelines by directly optimizing predictive models through downstream decision losses. Existing DFL frameworks are limited by their strictly sequential structure, referred to as sequential DFL (S-DFL). However, S-DFL fails to capture the bidirectional feedback between prediction and optimization in complex interaction scenarios. In view of this, we first time propose recursive decision-focused learning (R-DFL), a novel framework that introduces bidirectional feedback between downstream optimization and upstream prediction. We further extend two distinct differentiation methods: explicit unrolling via automatic differentiation and implicit differentiation based on fixed-point methods, to facilitate efficient gradient propagation in R-DFL. We rigorously prove that both methods achieve comparable gradient accuracy, with the implicit method offering superior computational efficiency. Extensive experiments on both synthetic and real-world datasets, including the newsvendor problem and the bipartite matching problem, demonstrate that R-DFL not only substantially enhances the final decision quality over sequential baselines but also exhibits robust adaptability across diverse scenarios in closed-loop decision-making problems.
\end{abstract}


\section{Introduction}
Real-world operational tasks, such as vehicle routing, power generation scheduling, and inventory management, frequently involve decision-making under uncertainties \citep{donti2017task, qi2023practical, sadana2025survey, elmachtoub2022smart}. Conventional \emph{Predict-then-Optimize (PTO)} framework tackles such tasks by first predicting uncertain parameters through machine learning (ML), then solving optimization based on predictions. While widely adopted, PTO often yields suboptimal decisions by minimizing intermediate prediction errors rather than final decision quality \citep{elmachtoub2022smart, mandi2024decision, wang2025bridging, bertsimas2020predictive}. \emph{Decision-Focused Learning (DFL)} represents a significant advancement by embedding optimization directly into the learning process, thereby minimizing end-to-end decision errors rather than prediction losses \citep{amos2017optnet, mandi2022decision, kotary2021end, wang2025bridging}.

However, traditional DFL maintains a sequential prediction-then-optimize structure, which we term \emph{Sequential DFL (S-DFL)} in this paper. The one-way assumption in S-DFL: predictions inform optimization, but optimization outcomes do not influence subsequent predictions, renders S-DFL inadequate for complex, interactive systems where decisions generate feedback that should recursively refine predictions. Consider the ride-hailing matching problem as a canonical example: when the platform proposes driver-rider matching pairs, user accept/reject decisions provide immediate feedback that should inform the matching decisions \citep{qin2020ride}. This scenario exemplifies a broader class of multi-stakeholder adaptive systems, including dynamic pricing scenarios \citep{levina2009dynamic, jia2014online} and task allocation systems \citep{zhao2020preference}, which share three common characteristics: (\textit{i}) Initial predictions guide decisions, (\textit{ii}) Decision outcomes generate observable feedback, and (\textit{iii}) The feedback should recursively refine predictions. Such a recursive interaction creates a closed-loop coupling that S-DFL fails to capture, potentially leading to unstable training and suboptimal outcomes.

\textbf{The first research question arises: How can we effectively model bidirectional prediction-optimization systems with mutually interdependent components?} 
To answer the question, we first time propose \emph{Recursive Decision-Focused Learning (R-DFL)}, an extension of S-DFL that explicitly models the bidirectional feedback between upstream prediction and downstream optimization. As illustrated in Figure~\ref{fig: SFL_DFL structure}, R-DFL retains the forward prediction-to-optimization pipeline of S-DFL while innovatively introducing a feedback loop from optimization back to prediction. In this R-DFL framework: (\textit{i}) The optimization module leverages early-stage predictions to generate decisions, and (\textit{ii}) The decision outcomes, in turn, refine the predictions through feedback. This closed-loop architecture explicitly captures the dynamic interdependency characteristics inherent in bidirectional systems, aiming to enhance prediction accuracy and enable more adaptive and robust optimization compared with S-DFL.

\begin{figure}
    \centering
    \includegraphics[width=0.95\linewidth]{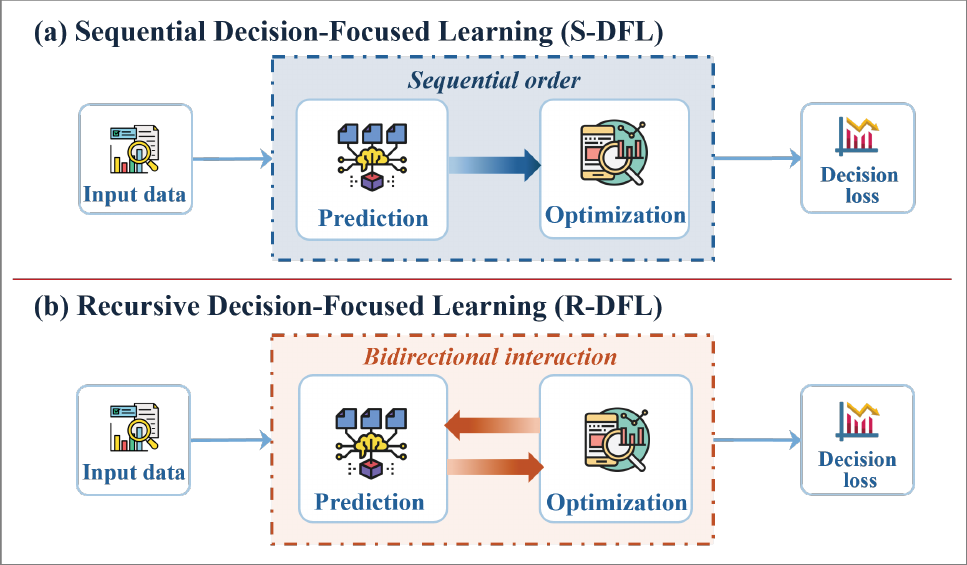}
    \caption{Comparison of S-DFL and R-DFL. R-DFL extends S-DFL by capturing the bidirectional interaction between prediction and optimization, where predictions are informed by both input features and optimization feedback.}
    \label{fig: SFL_DFL structure}
\end{figure}

\textbf{The second research question arises: How does gradient propagation operate in the bidirectional structure of the R-DFL framework?} The recursive framework introduces fundamental computational challenges: the cycle between prediction and optimization creates directed cyclic graphs, violating the directed acyclic graph (DAG) assumption underlying conventional deep learning (DL). Therefore, the resulting cyclic dependencies disrupt conventional gradient flow patterns and prevent a straightforward application of standard backpropagation via auto-differentiation (AD), necessitating specialized differentiation methods that preserve the recursive benefits while maintaining training stability and convergence.

To address the backpropagation challenge, we propose two differentiation schemes: explicit unrolling methods and implicit differentiation methods \citep{monga2021algorithm}. The explicit unrolling method handles cyclicity by sequentially unrolling the bidirectional prediction-optimization over a fixed number of iterations during the forward pass, allowing gradients to backpropagate through all unrolled layers via standard AD. While benefiting from straightforward implementation, the explicit unrolling method introduces significant computational overhead that scales with the unrolling depth, making it computationally complicated for deep recursive systems. 

In contrast, the implicit differentiation method formulates the coupled system as a fixed-point problem. During the forward pass, equilibrium states are computed using \texttt{RootFind} methods, while backward propagation leverages the Implicit Function Theorem \citep{krantz2002implicit} to derive gradients directly at equilibrium points. The implicit differentiation method offers substantial computational advantages by eliminating the need for unrolling computations, hence significantly reducing training time. 

We rigorously establish the equivalence of gradient updates between explicit unrolling and implicit differentiation methods through the Neumann series \citep{liao2018reviving, dimitrov2017computation}. This theoretical result guarantees consistent final accuracy performance regardless of the implementation choice of explicit unrolling and implicit differentiation methods in the R-DFL framework.

In summary, we present four major contributions:
\begin{itemize}
    \item We first time propose R-DFL, a novel recursive decision-focused learning framework that models bidirectional prediction-optimization feedback within the deep learning architecture, enabling closed-loop decision-making in dynamic systems.
    
    \item We develop two differentiation schemes for solving R-DFL with gradient propagation: (\textit{i}) A \emph{multi-layer} explicit unrolling method via auto-differentiation, and (\textit{ii}) A \emph{single-layer} implicit differentiation method that computes equilibrium gradients directly via the Implicit Function Theorem.
    
    \item We rigorously establish the \emph{gradient equivalence} between both differentiation schemes, guaranteeing consistent accuracy performance for explicit unrolling and implicit differentiation methods. 
    
    \item Extensive experiments on both synthetic and real-world datasets, including newsvendor and bipartite matching problems, demonstrate that the R-DFL framework significantly enhances the final decision quality. Comparing the two differentiating methods, while the explicit unrolling methods provide a more straightforward implementation, they often come with higher computational costs. The implicit differentiation methods require manual derivation of gradients but deliver superior computational efficiency.
\end{itemize}

\section{Related Works}
\subsection{Decision-Focused Learning}
DFL has emerged as an effective end-to-end framework to improve the decision quality by integrating predictive (prediction) and prescriptive (optimization) analytics within a unified DL architecture \citep{bertsimas2020predictive, mandi2022decision, postek2016multistage}. In this paper, we refer to this approach as Sequential DFL (S-DFL). S-DFL diverges from traditional PTO by embedding the optimization module as an implicit differentiable layer within the deep learning architecture, mapping input parameters directly to optimal solutions and incorporating decision quality into the loss function, allowing the predictive model to be trained using task-specific errors. Therefore, gradients can backpropagate efficiently through the optimization layer to parameters in the predictive models \citep{mandi2020smart, Berthet2020}. 
To improve final decision accuracy, one option is to develop novel surrogate loss functions \citep{kotary2022end, elmachtoub2022smart}.
Researchers have also developed specialized methods tailored to embedding distinct classes of optimization problems, including discrete optimization \citep{mandi2024decision, ferber2020mipaal}, linear programming \citep{mandi2020interior}, and quadratic programming \citep{amos2017optnet}. For handling non-smooth optimization problems, relaxation and approximation techniques have also been proposed \citep{Wilder2019, wang2019satnet}. 

The development of specialized software packages has significantly lowered the implementation barrier for S-DFL. Tools like \texttt{OptNet} for quadratic programming \citep{amos2017optnet}, \texttt{CvxpyLayers} for convex programming \citep{agrawal2019differentiable}, and \texttt{PyEPO} for linear programming \citep{tang2022pyepo} have enabled widespread real-world applications in domains such as inventory management and energy grid scheduling \citep{elmachtoub2022smart, donti2017task}. These advances have established S-DFL as a versatile framework for decision-making under uncertainty. \textbf{Despite its successes, S-DFL remains limited to sequential prediction-optimization pipelines and lacks generalization to bidirectional settings where prediction and optimization interact dynamically.}

\subsection{Explicit Unrolling Methods}
The derivation of gradients for solving DFL can be categorized into two distinct approaches based on their differentiation mechanisms: explicit unrolling and implicit differentiation. The explicit unrolling methods have found widespread applications in both time series modeling (e.g., recurrent neural networks like ResNet \citep{he2016deep}) and differentiable optimization. The explicit unrolling methods work by expanding either the temporal sequence (for time-series models) or the complete optimization procedure (for differentiable optimization) into an explicit computational graph during the forward pass. Each iteration is treated as a separate layer, creating a finite, unrolled representation that can be processed using standard automatic differentiation frameworks \citep{kotary2023backpropagation}. During backpropagation, gradients are calculated by sequentially applying the chain rule through all unrolled layers. Despite its widespread adoption, the explicit unrolling methods present several critical limitations. For problems requiring numerous iterations, the computational graph becomes excessively large, leading to (\textit{i}) substantial memory overhead, (\textit{ii}) increased computational time, and (\textit{iii}) potential numerical instability in gradient flow. These challenges mirror the vanishing and exploding gradient problems commonly observed in deep recurrent neural networks \citep{monga2021algorithm}, significantly constraining the applicability of explicit unrolling methods to complex, iterative problems.

\subsection{Implicit Differentiation Methods}
Implicit differentiation methods offer a powerful alternative to explicit approaches by replacing iterative procedures with a single implicit layer, significantly reducing the complexity of the corresponding computational graph. In this paradigm, the layer output is defined implicitly as the solution to an equilibrium equation rather than through an explicit computational graph. During backpropagation, gradients are computed directly at the solution point using the Implicit Function Theorem, which avoids the need to store intermediate states \citep{agrawal2019differentiable}. Implicit differentiation methods have found successful applications across multiple domains. In time-series modeling, the implicit methods have been employed in Neural ODEs \citep{chen2018neural} and Deep Equilibrium Models \citep{bai2019deep}, where fixed-point methods identify equilibrium states \citep{el2021implicit, zhang2019equilibrated, kazi2017implicitly, li2020end}. In differentiable optimization, implicit differentiation has been primarily implemented through differentiation of Karush-Kuhn-Tucker (KKT) conditions \citep{amos2017optnet, agrawal2019differentiable, Wilder2019}. Comparative studies highlight several advantages of implicit differentiation methods over explicit unrolling approaches. As noted by \citet{scellier2017equilibrium}, implicit differentiation methods provide better numerical stability while achieving superior memory and computational efficiency due to their compact representation. However, implicit methods typically require manual derivation of gradient expressions. Modern machine learning frameworks like PyTorch have addressed this challenge by providing native support for automatic differentiation and interfaces for custom gradient rules, making implicit differentiation methods more accessible to practitioners \citep{paszke2017automatic}. {\bf Overall, both explicit and implicit methods for R-DFL are rarely explored.}

\section{Preliminaries}
In this section, we briefly introduce notations and mathematical formulations of previous work on S-DFL.

\subsection{S-DFL}

The S-DFL pipeline comprises two essential modules: a predictive model $\mathcal{F}_\theta$ and a convex optimization model $\mathcal{G}$. 

\subsubsection{Predictive Model} Let $\mathcal{F}_\theta: \mathbb{R}^{d} \to \mathbb{R}^n$ be a parametric function with parameters $\theta\in \mathbb{R}^d$ that maps input features $\boldsymbol{v} \in \mathbb{R}^{d}$ to predicted outputs $\boldsymbol{\hat{c}}: = \mathcal{F}_{\theta}(\boldsymbol{v})$. $\hat{\boldsymbol{c}} \in \mathbb{R}^n$ then serve as parameters for the downstream optimization problem $\mathcal{G}$. 

\subsubsection{Convex Optimization Model} Define a convex program $\mathcal{G}: \mathbb{R}^n \to \mathbb{R}^n$ with objective $g$ and constraints $\boldsymbol{h}$ that takes input parameters $\hat{\boldsymbol{c}}$ and returns optimal solutions $\boldsymbol{x}^*({\hat{\boldsymbol{c}}})$ as:
\begin{align}
\label{eq: sdfl}
\mathcal{G}:  \boldsymbol{x}^*({\hat{\boldsymbol{c}}}) &= \underset{\boldsymbol{x} \in \mathcal{A}}{\text{argmin}} \ g(\boldsymbol{x}; \boldsymbol{\hat{c}}),
\end{align}
where the differentiable convex objective function is $g: \mathbb{R}^n \to \mathbb{R}$, $\boldsymbol{x} \in \mathcal{A}$ is the feasible region with linear constraints $\boldsymbol{h}(\boldsymbol{x}) \leq \boldsymbol{0}, \ \boldsymbol{h}: \mathbb{R}^n \to \mathbb{R}^m$. For simplicity, we omit the unpredictable parameters in the notation of $\mathcal{G}$. 

\subsubsection{Loss Function} The goal of S-DFL is to minimize the expected decision regret \citep{elmachtoub2022smart}: $\mathcal{R}(\boldsymbol{\hat{c}}, \boldsymbol{c}) =  g \bigl( \boldsymbol{x}^*(\boldsymbol{\hat{c}}), \boldsymbol{c}) - g(\boldsymbol{x}^*(\boldsymbol{c}), \boldsymbol{c}).$ If given a dataset $\mathcal{D} = \{ (v_n, c_n )\}_{n=1}^N$ with $N$ samples, the empirical loss function is: $\mathcal{L}_{(\boldsymbol{v},\boldsymbol{c})\in\mathcal{D}} 
    = \frac{1}{N} \sum_{n=1}^N  \mathcal{R}( \mathcal{F}_\theta(\boldsymbol{v}_n), \boldsymbol{c}_n).$

\subsubsection{Gradient Computation} To update the parameters $\theta$ in the predictive model via gradient descent methods, one can derive the gradient through the chain rule: $\frac{\partial \mathcal{L}}{\partial \theta} = \frac{\partial \mathcal{L}}{\partial \boldsymbol{x}^*} \frac{\partial \boldsymbol{x}^*}{\partial \boldsymbol{\hat{c}}} \frac{\partial \boldsymbol{\hat{c}}}{\partial \theta},$ where $\frac{\partial \mathcal{L}}{\partial \boldsymbol{x}^*}$ is the gradient of loss function w.r.t. the optimal decision, $\frac{\partial \boldsymbol{x}^*}{\partial \boldsymbol{c}}$ is obtained from the KKT condition by differentiating the convex optimization problem $\mathcal{G}$ or by unrolling the optimization procedures, and $\frac{\partial \boldsymbol{\hat{c}}}{\partial \theta}$ is the gradient of the output in the predictive model $\mathcal{F}_\theta$ w.r.t. parameters $\theta$.

\begin{figure*}[!ht]
\centering
\includegraphics[width=0.9\textwidth]{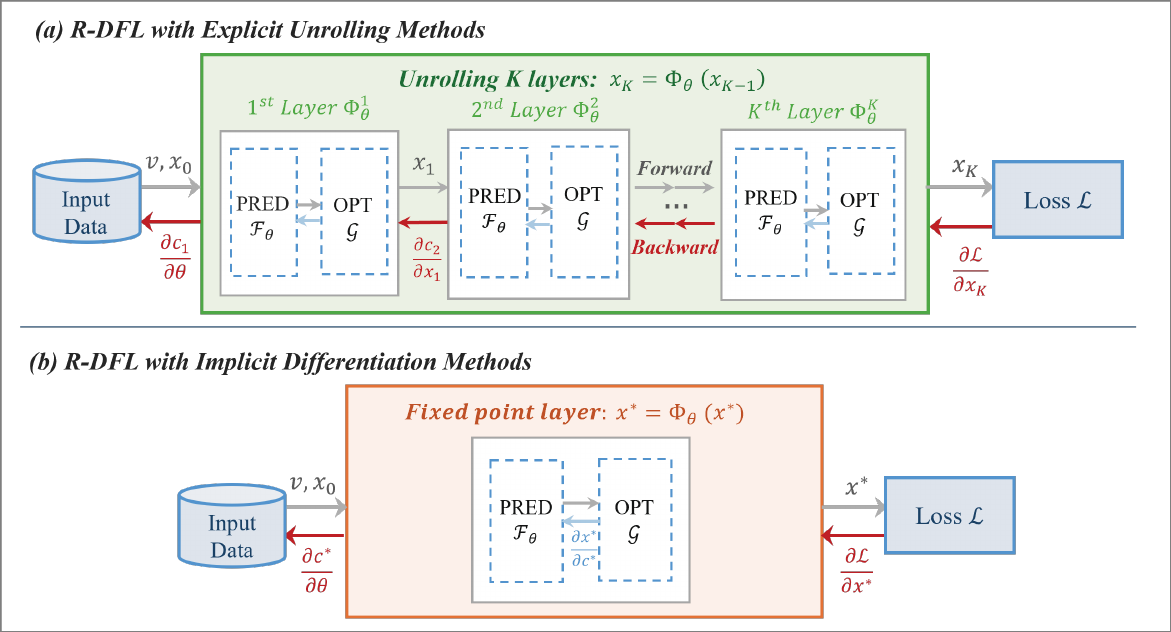}
\caption{Illustration of the R-DFL framework with explicit unrolling and implicit differentiation methods.}
\label{fig: R-DFL}
\end{figure*}

\section{Recursive Decision-Focused Learning}
In this section, we introduce the proposed Recursive Decision-Focused Learning (R-DFL) framework with two gradient propagation schemes: explicit unrolling and implicit differentiation, as illustrated in Figure \ref{fig: R-DFL}. Here we use the same notations of $\mathcal{F}_{\theta}$, $\mathcal{G}$, and $g$ in both S-DFL and R-DFL to smooth the presentation. Unlike conventional sequential approaches, the R-DFL models prediction and optimization with bidirectional feedback.The predictive model $\mathcal{F}_{\theta}$ takes inputs from both feature vector $\boldsymbol{v}$ and optimization results $\boldsymbol{x}$, then generates outputs $\hat{\boldsymbol{c}}$ to optimization model $\mathcal{G}$ as parameters.

\subsection{Overview of R-DFL Framework}
The R-DFL pipeline comprises two essential modules: a predictive model $\mathcal{F}_\theta$ and a convex optimization model $\mathcal{G}$. 

\subsubsection{Predictive Model} Let $\mathcal{F}_\theta: \mathbb{R}^{d+n} \to \mathbb{R}^{n}$ be a parametric predictive model with parameter $\theta \in \mathbb{R}^{d+n}$ that maps feature vectors $\boldsymbol{v} \in \mathbb{R}^{d}$ and optimization result $\boldsymbol{x} \in \mathbb{R}^n$ to predicted results $\boldsymbol{\hat{c}} \in \mathbb{R}^{n}$:\ $\hat{\boldsymbol{c}} = \mathcal{F}_{\theta}(\boldsymbol{x}, \boldsymbol{v})$.

\subsubsection{Convex Optimization Model} Define a convex optimization model $\mathcal{G}$ with differentiable objective function $g$ and linear constraints $\boldsymbol{h}$, takes the input from the predictive model $\hat{\boldsymbol{c}}$, and generates the optimal results:
\begin{align}
    \mathcal{G}:  \boldsymbol{x}^*({\hat{\boldsymbol{c}}}) &= \underset{\boldsymbol{x} \in \mathcal{A}}{\text{argmin}} \ g(\boldsymbol{x}; \boldsymbol{\hat{c}}),
\end{align}
where $g: \mathbb{R}^n \times \mathbb{R}^n \to \mathbb{R}$ is a differentiable convex function in $\boldsymbol{x}$ for fixed $\hat{\boldsymbol{c}}$, $\boldsymbol{x} \in \mathcal{A}$ is the feasible region with linear constraints $\boldsymbol{h}(\boldsymbol{x}) \leq \boldsymbol{0}, \ \boldsymbol{h}: \mathbb{R}^n \to \mathbb{R}^m$, $\boldsymbol{x}^*(\hat{\boldsymbol{c}})$ is the optimal solution. We consider the case with two assumptions in the optimization model $\mathcal{G}$: 

\begin{itemize}
    \item \textit{Single-valued, differentiable and convex}. The optimization model $\mathcal{G}$ induces a (potentially multi-valued) mapping from parameters to solutions. In our framework, we restrict $\mathcal{G}$ to be single-valued, differentiable, and convex \citep{diamond2016cvxpy}.
    \item \textit{Parameter only in objective}. The predicted parameter $\hat{\boldsymbol{c}}$ appears exclusively in the objective function, while constraints remain fixed, guaranteeing that the KKT conditions yield a well-defined differentiable mapping from predictions to decisions.
\end{itemize}

\subsubsection{Loss Function} To minimize the expected decision regret, the loss function of R-DFL is defined as:
\begin{align}
    \mathcal{L} = \mathbb{E}_{\boldsymbol{c}, \boldsymbol{v}} \left[ \mathcal{R}\bigl(\mathcal{F}_\theta(\boldsymbol{v}, \boldsymbol{x}^*(\boldsymbol{\hat{c}})), \boldsymbol{c}\ \bigr) \right].
\end{align} 

\subsubsection{Gradient Computation} To update the parameters $\theta$ in the predictive model, we need to compute the gradient $\frac{\partial \mathcal{L}}{\partial \theta}$ in the R-DFL framework. However, the recursive interaction between $\mathcal{F}_\theta$ and $\mathcal{G}$ in R-DFL prevents direct application of the chain rule like S-DFL. We address it through explicit unrolling and implicit differentiation methods below. 

\subsection{Explicit Unrolling Methods}

As illustrated in Figure \ref{fig: R-DFL}(a), the first explicit differentiation method unrolls $K$ iterations of the prediction-optimization cycle, treating each iteration as a separate layer in the computational graph. This allows gradient backpropagation through the entire unrolled sequence.

\subsubsection{Forward Pass} In each separate layer in the $K$ iterations, we compute predicition and optimization in sequence:
\begin{align}
\label{eq: cx}
    \hat{\boldsymbol{c}}_i = \mathcal{F}_\theta(\boldsymbol{x}_{i-1}, \boldsymbol{v}),\quad
    \boldsymbol{x}_i = \mathcal{G}(\hat{\boldsymbol{c}}_i),
  \quad i=1,2,\dots, K  
\end{align}

Define the composed prediction-optimization layer as $\Phi_\theta = \mathcal{G} \circ \mathcal{F}_\theta$, such that each layer in the unrolling process in Equation \ref{eq: cx} is compactly expressed to:
\begin{align}
\label{eq: x = g(f(x))}
\small
    \boldsymbol{x}_i = \mathcal{G}(\mathcal{F}_\theta(\boldsymbol{x}_{i-1},\boldsymbol{v})) = \Phi_\theta(\boldsymbol{x}_{i-1},\boldsymbol{v}), \quad i=1,2,\dots, K
\end{align}
where $\boldsymbol{x}_0$ is initialized randomly, and all $K$ layers in $\Phi_\theta$ share the same $\theta$.

Then the full unrolling computation of a total of $K$ layers in the forward pass yields (Omit $\boldsymbol{v}$ for simplicity) :
\begin{align}
    \boldsymbol{x}_K = \Phi_\theta(\boldsymbol{x}_{K-1}) \circ\ \Phi_\theta(\boldsymbol{x}_{K-2})\circ \cdots \ \circ \Phi_\theta(\boldsymbol{x}_0).
\end{align}

The final output $\boldsymbol{x}_K$ serves as input to the loss function $\mathcal{L}$.

\subsubsection{Backward Pass}
The gradient backpropagates sequentially through each composite layer $\Phi_\theta$ from layer $K$ to layer $1$ in the computational graph. The complete gradient $ \frac{\partial \mathcal{L}}{\partial \theta}$ for the explicit unrolling methods is formally given in Theorem \ref{thm: theorem_explicit}.

\begin{theorem}[\textbf{Gradient of Explicit Unrolling Methods}]
\label{thm: theorem_explicit}
Define the Jacobian of function $\Phi_\theta$  at $\boldsymbol{x}_{i}$ to be:
    \begin{align}
    \label{eq: J}
        J_{\Phi_\theta}|_{\boldsymbol{x}_{i}} &= 
        \frac{\partial \boldsymbol{x}_{i+1}}{\partial \boldsymbol{x}_{i}} =  
        \frac{\partial \Phi_\theta(\boldsymbol{x}_i)}{\partial \boldsymbol{x}_i}, \ \forall i \in \{1,...,K-1\},
    \end{align}

The Jacobian $J_{\Phi_\theta}|_{\boldsymbol{x}_{i}}$ is computed as:
    \begin{align}
    \label{eq: expand J_theta}
        J_{\Phi_\theta}|_{\boldsymbol{x}_{i}}  =  J_{\mathcal{G}}|_{\boldsymbol{c}_{i+1}} \cdot J_{\mathcal{F}_\theta}|_{\boldsymbol{x}_{i}}.   \ \forall i \in \{1,...,K-1\} 
    \end{align}
    
Then the loss gradient $\mathcal{L}$ w.r.t. $\theta$ at the final $K^{th}$ layer is:
    \begin{equation}
    \label{eq: main_theorem1_loss}
    \frac{\partial \mathcal{L}}{\partial \theta} = \frac{\partial \mathcal{L}}{\partial \boldsymbol{x}_K}  \Sigma_{i=1}^{K} \left(\left(\prod_{j=i+1}^{K} J_{\Phi_\theta}|_{\boldsymbol{x}_{j-1}}\right) 
        \frac{\partial \Phi_\theta(\boldsymbol{x}_{i-1})}{\partial\theta}\right).
    \end{equation}
    
\end{theorem}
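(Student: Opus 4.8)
The plan is to prove Equation~\ref{eq: main_theorem1_loss} by a standard backpropagation-through-time argument applied to the unrolled recursion $\boldsymbol{x}_i = \Phi_\theta(\boldsymbol{x}_{i-1},\boldsymbol{v})$, being careful to account for the fact that every layer shares the same parameter $\theta$. First I would establish the Jacobian factorization in Equation~\ref{eq: expand J_theta}: since $\Phi_\theta = \mathcal{G}\circ\mathcal{F}_\theta$, the map from $\boldsymbol{x}_i$ to $\boldsymbol{x}_{i+1}$ factors as $\boldsymbol{x}_i \mapsto \hat{\boldsymbol{c}}_{i+1} = \mathcal{F}_\theta(\boldsymbol{x}_i,\boldsymbol{v}) \mapsto \boldsymbol{x}_{i+1} = \mathcal{G}(\hat{\boldsymbol{c}}_{i+1})$, so the ordinary chain rule for composition gives $J_{\Phi_\theta}|_{\boldsymbol{x}_i} = J_{\mathcal{G}}|_{\boldsymbol{c}_{i+1}}\cdot J_{\mathcal{F}_\theta}|_{\boldsymbol{x}_i}$, where the former Jacobian exists by the single-valued/differentiable/convex assumption on $\mathcal{G}$ (via KKT differentiation, parameter only in the objective) and the latter by differentiability of the predictive network. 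This part is essentially bookkeeping.

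Next I would compute $\frac{\partial \mathcal{L}}{\partial\theta}$ by treating $\theta$ as appearing independently in each of the $K$ composite layers and summing the contributions — the total-derivative version of the chain rule for shared weights. Concretely, I would introduce ``virtual'' copies $\theta^{(1)},\dots,\theta^{(K)}$, one per layer, so that $\boldsymbol{x}_i$ depends on $\theta^{(i)}$ directly (through $\Phi_{\theta^{(i)}}(\boldsymbol{x}_{i-1})$) and on $\theta^{(1)},\dots,\theta^{(i-1)}$ indirectly through $\boldsymbol{x}_{i-1}$. By the multivariate chain rule, $\frac{\partial\mathcal{L}}{\partial\theta} = \sum_{i=1}^K \frac{\partial\mathcal{L}}{\partial\theta^{(i)}}$. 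I would then show by a short backward induction on $i$ (from $K$ down to $1$) that the ``adjoint'' $\frac{\partial\mathcal{L}}{\partial\boldsymbol{x}_i}$ equals $\frac{\partial\mathcal{L}}{\partial\boldsymbol{x}_K}\prod_{j=i+1}^{K}J_{\Phi_\theta}|_{\boldsymbol{x}_{j-1}}$, using $\boldsymbol{x}_{j} = \Phi_\theta(\boldsymbol{x}_{j-1})$ and the Jacobian definition in Equation~\ref{eq: J}. Substituting this adjoint into $\frac{\partial\mathcal{L}}{\partial\theta^{(i)}} = \frac{\partial\mathcal{L}}{\partial\boldsymbol{x}_i}\cdot\frac{\partial\Phi_\theta(\boldsymbol{x}_{i-1})}{\partial\theta}$ and summing over $i$ yields precisely Equation~\ref{eq: main_theorem1_loss}, with the convention that the empty product $\prod_{j=K+1}^{K}$ is the identity matrix.

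I expect the main subtlety — rather than a deep obstacle — to be handling the weight-sharing correctly: $\frac{\partial\Phi_\theta(\boldsymbol{x}_{i-1})}{\partial\theta}$ must be read as the partial derivative with respect to the explicit $\theta$-dependence of layer $i$ only, holding $\boldsymbol{x}_{i-1}$ fixed, and the indirect dependence of $\boldsymbol{x}_{i-1}$ on $\theta$ from earlier layers is what generates the Jacobian product prefactors for the other terms in the sum. I would make this rigorous either via the virtual-copy device above or, equivalently, by unrolling $\boldsymbol{x}_K$ as an explicit nested composition and differentiating term by term with the product rule. A secondary point worth a sentence is justifying that $\boldsymbol{x}^*(\hat{\boldsymbol{c}})$ is differentiable at each iterate so that all the Jacobians in the statement are well-defined; this is exactly what the two assumptions on $\mathcal{G}$ (single-valued, differentiable, convex; parameter only in the objective) were imposed to guarantee, so I would simply invoke them. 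No estimates or convergence arguments are needed here, since the theorem is a purely algebraic identity for a fixed finite unrolling depth $K$.
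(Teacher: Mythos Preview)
Your proposal is correct and arrives at the same identity, but the organization differs from the paper's. The paper proceeds by a \emph{forward} sensitivity recursion: it writes $\frac{\partial\mathcal{L}}{\partial\theta}=\frac{\partial\mathcal{L}}{\partial\boldsymbol{x}_K}\frac{\partial\boldsymbol{x}_K}{\partial\theta}$, differentiates the update rule to obtain the linear recurrence $\frac{\partial\boldsymbol{x}_i}{\partial\theta}=J_{\Phi_\theta}|_{\boldsymbol{x}_{i-1}}\frac{\partial\boldsymbol{x}_{i-1}}{\partial\theta}+\frac{\partial\Phi_\theta(\boldsymbol{x}_{i-1})}{\partial\theta}$, and then unrolls this recurrence from $i=0$ upward to read off the closed-form sum. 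You instead run the computation \emph{backward}: introduce virtual per-layer copies $\theta^{(i)}$, establish the adjoints $\frac{\partial\mathcal{L}}{\partial\boldsymbol{x}_i}=\frac{\partial\mathcal{L}}{\partial\boldsymbol{x}_K}\prod_{j=i+1}^{K}J_{\Phi_\theta}|_{\boldsymbol{x}_{j-1}}$ by downward induction, and then sum the per-layer contributions. These are precisely the forward-mode and reverse-mode accumulations of the same chain-rule computation; the paper's version makes the total derivative $\frac{\partial\boldsymbol{x}_K}{\partial\theta}$ the primary object, while your adjoint formulation mirrors how automatic differentiation actually executes the backward pass and makes the weight-sharing subtlety (your ``main subtlety'') more transparent via the virtual-copy device. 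Either route is a complete proof; no gap.
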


\noindent \textbf{Proof.} See Appendix A.1. 

Theorem \ref{thm: theorem_explicit} presents the general form of the task-specific loss gradient in the backward pass of the explicit methods through the chain rule by AD. Appendix B.1 shows the pseudocode of the explicit unrolling methods. Below, we give brief explanations:
\begin{itemize}
    \item The product term $\prod_{j=i+1}^K J_{\Phi_\theta}|_{\boldsymbol{x}_{j-1}}$ captures the indirect cumulative gradients from path $\boldsymbol{x}_{K} \to \boldsymbol{x}_i$. 
    \item The partial derivative $\frac{\partial \Phi_\theta(\boldsymbol{x}_{i-1})}{\partial \theta}$ refers to the direct gradient dependence of $\boldsymbol{x}_i$ on $\theta$ at each unrolling step.
    \item Derivation of Equation \ref{eq: expand J_theta} is provided in Appendix A.5. The $J_{\mathcal{G}}|_{\boldsymbol{c}_{i}}$ is obtained from conducting sensitivity analysis on the KKT condition of optimization model $\mathcal{G}$, and $J_{\mathcal{F}_\theta}|_{\boldsymbol{x}_{i}}$ is the gradients in the predictive model. 
\end{itemize}

\subsection{Implicit Differentiation Methods}

The implicit differentiation method bypasses layer-by-layer unrolling by deriving directly at the equilibrium point through fixed-point methods, as illustrated in Figure \ref{fig: R-DFL}(b).

\subsubsection{Forward Pass}
The forward pass computes the equilibrium state $\boldsymbol{x}^*$ through \texttt{RootFind} of the composed model $\Phi_\theta$. At convergence, the equilibrium point satisfies:
\begin{align}
    &\boldsymbol{x}^* = \mathcal{G}(\mathcal{F}_\theta(\boldsymbol{x}^*, \boldsymbol{v})) = \Phi_\theta(\boldsymbol{x}^*, \boldsymbol{v}).
\end{align}

Let $\mathcal{H}_\theta$ denote the fixed-point layer (Omit $\boldsymbol{v}$ for simplicity):
\begin{align}
\label{eq: H_theta}
   \mathcal{H}_\theta = \boldsymbol{x}^* - \Phi_\theta(\boldsymbol{x}^*)  \rightarrow 0,
\end{align}
where the equilibrium point $\boldsymbol{x}^*$ is the root of $\mathcal{H}_\theta$. At convergence, prediction results $\boldsymbol{c}^*$ is obtained as: $\boldsymbol{c}^* = \mathcal{H}_\theta(\boldsymbol{x}^*)$. 

Note that alternative methods can be employed to achieve faster convergence guarantees, rather than relying on fixed-point iterations. For instance, if the composite function $\mathcal{H}$ is differentiable and convex, Newton's method or quasi-Newton methods can be used.

\subsubsection{Backward Pass} The gradient of the implicit methods directly back propagates at the equilibrium point. 
\begin{theorem}[\textbf{Gradient of Implicit Differentiation Methods}]
\label{thm: implicit}
The loss gradient back propagates at the equilibrium point $\boldsymbol{x}^*$ w.r.t. $\theta$ is given as:
\begin{align}
    \label{eq: main_Ltheta}
        \frac{\partial \mathcal{L}}{\partial \theta} = 
        \frac{\partial \mathcal{L}}{\partial \boldsymbol{x}^*}
        (I - J_{\Phi_\theta}|_{\boldsymbol{x^*}})^{-1}
        \frac{\partial \Phi_{\theta}(\boldsymbol{x}^*)}{\partial \theta},
    \end{align}
    where $J_{\Phi_\theta}|_{\boldsymbol{x^*}}$ is the Jacobian of $\Phi_\theta$ evaluated at $\boldsymbol{x}^*$.
\end{theorem}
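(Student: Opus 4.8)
The plan is to differentiate the equilibrium condition implicitly and then compose with the upstream loss sensitivity via the chain rule. At convergence the forward pass returns a point $\boldsymbol{x}^*$ satisfying the fixed-point equation $\boldsymbol{x}^* = \Phi_\theta(\boldsymbol{x}^*, \boldsymbol{v})$, equivalently $\mathcal{H}_\theta(\boldsymbol{x}^*) = \boldsymbol{x}^* - \Phi_\theta(\boldsymbol{x}^*) = \boldsymbol{0}$ as in Equation~\ref{eq: H_theta}. Since $\mathcal{F}_\theta$ is differentiable and, under the stated assumptions on $\mathcal{G}$ (single-valued, differentiable, parameter only in the objective so that KKT sensitivity yields a well-defined Jacobian), the solution map $\mathcal{G}$ is differentiable, the composite $\Phi_\theta = \mathcal{G}\circ\mathcal{F}_\theta$ is continuously differentiable in both $\boldsymbol{x}$ and $\theta$; hence $\mathcal{H}_\theta$ is $C^1$ near $(\boldsymbol{x}^*,\theta)$.

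First I would establish that the partial Jacobian $\partial \mathcal{H}_\theta / \partial \boldsymbol{x} = I - J_{\Phi_\theta}|_{\boldsymbol{x}^*}$ is nonsingular at the equilibrium. This nondegeneracy is exactly the hypothesis required to invoke the Implicit Function Theorem \citep{krantz2002implicit}, and it holds whenever $\boldsymbol{x}^*$ is an attracting fixed point, i.e. the spectral radius of $J_{\Phi_\theta}|_{\boldsymbol{x}^*}$ is strictly below one — the same regime in which the \texttt{RootFind} iteration converges and in which the Neumann expansion $(I - J_{\Phi_\theta}|_{\boldsymbol{x}^*})^{-1} = \sum_{k\ge 0}(J_{\Phi_\theta}|_{\boldsymbol{x}^*})^{k}$ is valid. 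Granting this, the Implicit Function Theorem provides a differentiable local solution map $\theta \mapsto \boldsymbol{x}^*(\theta)$.

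Next I would differentiate the identity $\boldsymbol{x}^*(\theta) = \Phi_\theta(\boldsymbol{x}^*(\theta),\boldsymbol{v})$ totally in $\theta$. The chain rule gives
\begin{align}
\frac{d\boldsymbol{x}^*}{d\theta} = J_{\Phi_\theta}|_{\boldsymbol{x}^*}\,\frac{d\boldsymbol{x}^*}{d\theta} + \frac{\partial \Phi_\theta(\boldsymbol{x}^*)}{\partial \theta},
\end{align}
where $J_{\Phi_\theta}|_{\boldsymbol{x}^*}$ is the Jacobian in the first argument and $\partial \Phi_\theta(\boldsymbol{x}^*)/\partial \theta$ is the explicit dependence on $\theta$ with $\boldsymbol{x}^*$ held fixed. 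Collecting the $d\boldsymbol{x}^*/d\theta$ terms and left-multiplying by $(I - J_{\Phi_\theta}|_{\boldsymbol{x}^*})^{-1}$ yields
\begin{align}
\frac{d\boldsymbol{x}^*}{d\theta} = (I - J_{\Phi_\theta}|_{\boldsymbol{x}^*})^{-1}\,\frac{\partial \Phi_\theta(\boldsymbol{x}^*)}{\partial \theta}.
\end{align}
Composing with the upstream sensitivity through $\frac{\partial \mathcal{L}}{\partial \theta} = \frac{\partial \mathcal{L}}{\partial \boldsymbol{x}^*}\,\frac{d\boldsymbol{x}^*}{d\theta}$ then delivers Equation~\ref{eq: main_Ltheta}. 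As a consistency check one can substitute the Neumann series for the inverse and recover the $K\to\infty$ limit of the explicit-unrolling gradient in Theorem~\ref{thm: theorem_explicit}, previewing the equivalence result.

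The hard part will be the invertibility of $I - J_{\Phi_\theta}|_{\boldsymbol{x}^*}$ and making the appeal to the Implicit Function Theorem fully rigorous: one must argue that the KKT-derived Jacobian $J_{\mathcal{G}}$ composed with $J_{\mathcal{F}_\theta}$ has spectral radius below one at $\boldsymbol{x}^*$, which is precisely the condition underpinning both convergence of the fixed-point solver and the Neumann representation of the inverse. The remaining steps are a routine total-derivative computation.
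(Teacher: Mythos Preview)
Your proposal is correct and mirrors the paper's own proof in Appendix~A.2: differentiate the fixed-point identity $\boldsymbol{x}^* = \Phi_\theta(\boldsymbol{x}^*)$ in $\theta$, solve for $d\boldsymbol{x}^*/d\theta$ via the Implicit Function Theorem, and compose with $\partial\mathcal{L}/\partial\boldsymbol{x}^*$. If anything you are slightly more careful than the paper, since you explicitly flag the invertibility hypothesis $\rho(J_{\Phi_\theta}|_{\boldsymbol{x}^*})<1$ that the paper only invokes later in Theorem~\ref{thm: equiv}.
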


\noindent\textbf{Proof.} See Appendix A.2. 

Theorem 2 presents the general form of the total loss gradient of the implicit method derived through the Implicit Function Theorem \citep{krantz2002implicit}. See Appendix B.2 for pseudocode of the implicit methods. Below, we give brief explanations:

\begin{itemize}
    \item In the forward \texttt{RootFind} procedures, we \textbf{disable} forward gradient tracking until finding the equilibrium point, ensuring the backward propagation operates directly from the equilibrium point $\boldsymbol{x}^*$ to the parameter $\theta$. 
    \item The implicit differentiation method computes exact gradients through the inverse Jacobian term $(I - J_{\Phi_\theta}|_{\boldsymbol{x^*}})^{-1}$, which implicitly captures the full unrolling structure of the forward pass. 
\end{itemize}

\subsection{Gradient Equivalence of Unrolling and Implicit Differentiation Methods}
We first present the convergence of explicit unrolling methods through Polyak-Łojasiewicz Inequality (PL) condition \citep{xiao2023alternating}, then prove the equivalent gradient of the explicit unrolling and implicit methods using the Neumann Series \citep{dimitrov2017computation}.

\begin{lemma}[\textbf{Convergence of Explicit Unrolling Methods}]
\label{lemma: pl}
Suppose the gradient of the differentiable function $\Phi_\theta$ is Lipschitz smooth, and $\Phi_\theta$ satisfies the PL condition with $\mu>0$:  
    \begin{align}
    \label{eq: pl}
        \frac{1}{2} \Vert \nabla \Phi_\theta(\boldsymbol{x}) \Vert^2 \geq \mu(\Phi_\theta(\boldsymbol{x}) - \Phi_\theta^*), 
    \end{align}
    where $\Phi_\theta^* = \underset{\boldsymbol{x}}{\inf} \ \Phi_\theta(\boldsymbol{x}) = \boldsymbol{x}^*$. 
    
Then $\Phi_\theta$ will converge to the fixed point $\boldsymbol{x}^*$ after infinite $K$ iterations $\lim_{K\to \infty} \boldsymbol{x}_K =  \boldsymbol{x}^*$. 
\end{lemma}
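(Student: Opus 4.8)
The plan is to reduce the claim to the classical convergence guarantee for a gradient-descent-type recursion under a Polyak--Łojasiewicz (PL) condition, in the spirit of \citet{xiao2023alternating}. I treat the unrolling recursion $\boldsymbol{x}_i = \Phi_\theta(\boldsymbol{x}_{i-1})$ as the descent dynamics generated by the potential $\Phi_\theta$, i.e.\ an update $\boldsymbol{x}_{k+1} = \boldsymbol{x}_k - \eta\,\nabla \Phi_\theta(\boldsymbol{x}_k)$ with a fixed step $\eta \in (0, 1/L]$, where $L$ is the Lipschitz constant of $\nabla \Phi_\theta$ supplied by the smoothness hypothesis. The first step is a sufficient-decrease estimate: the descent lemma that follows from $L$-smoothness gives
\begin{equation}
\Phi_\theta(\boldsymbol{x}_{k+1}) \;\le\; \Phi_\theta(\boldsymbol{x}_k) - \eta\Bigl(1 - \tfrac{L\eta}{2}\Bigr)\,\Vert \nabla \Phi_\theta(\boldsymbol{x}_k)\Vert^2 \;\le\; \Phi_\theta(\boldsymbol{x}_k) - \tfrac{\eta}{2}\,\Vert \nabla \Phi_\theta(\boldsymbol{x}_k)\Vert^2 .
\end{equation}

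Next I subtract $\Phi_\theta^*$ from both sides and invoke the PL inequality \eqref{eq: pl}, i.e.\ $\Vert \nabla \Phi_\theta(\boldsymbol{x}_k)\Vert^2 \ge 2\mu\,(\Phi_\theta(\boldsymbol{x}_k) - \Phi_\theta^*)$, which turns the previous line into the geometric contraction
\begin{equation}
\Phi_\theta(\boldsymbol{x}_{k+1}) - \Phi_\theta^* \;\le\; (1 - \eta\mu)\,\bigl(\Phi_\theta(\boldsymbol{x}_k) - \Phi_\theta^*\bigr),
\end{equation}
with $0 < \eta\mu < 1$ (recall $\mu \le L$ for any PL-smooth function, and one may take $\eta < 1/L$). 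Iterating from $0$ to $K-1$ yields $\Phi_\theta(\boldsymbol{x}_K) - \Phi_\theta^* \le (1 - \eta\mu)^K\,(\Phi_\theta(\boldsymbol{x}_0) - \Phi_\theta^*) \to 0$, so the objective values converge to the infimum $\Phi_\theta^*$.

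Finally I would upgrade this value convergence to convergence of the iterates. Combining the geometric decay of the optimality gap with the elementary smoothness bound $\Vert \nabla \Phi_\theta(\boldsymbol{x})\Vert^2 \le 2L\,(\Phi_\theta(\boldsymbol{x}) - \Phi_\theta^*)$ (valid because $\Phi_\theta^*$ is the global infimum) gives $\Vert \boldsymbol{x}_{k+1} - \boldsymbol{x}_k\Vert = \eta\,\Vert \nabla \Phi_\theta(\boldsymbol{x}_k)\Vert \le \eta\sqrt{2L\,(\Phi_\theta(\boldsymbol{x}_0) - \Phi_\theta^*)}\;(1 - \eta\mu)^{k/2}$, whose right-hand side is summable in $k$; hence $\sum_{k} \Vert \boldsymbol{x}_{k+1} - \boldsymbol{x}_k\Vert < \infty$, the sequence $\{\boldsymbol{x}_k\}$ is Cauchy, and it converges to some limit $\bar{\boldsymbol{x}}$. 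Continuity of $\nabla \Phi_\theta$ and $\Vert \nabla \Phi_\theta(\boldsymbol{x}_k)\Vert \to 0$ force $\nabla \Phi_\theta(\bar{\boldsymbol{x}}) = \boldsymbol{0}$, and then \eqref{eq: pl} at $\bar{\boldsymbol{x}}$ forces $\Phi_\theta(\bar{\boldsymbol{x}}) = \Phi_\theta^*$, so $\bar{\boldsymbol{x}}$ is a global minimizer; by the single-valuedness assumption on $\mathcal{G}$, hence on $\Phi_\theta$, the fixed point is unique, giving $\bar{\boldsymbol{x}} = \boldsymbol{x}^*$ and therefore $\lim_{K\to\infty}\boldsymbol{x}_K = \boldsymbol{x}^*$. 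The two places I expect to spend the most care are (i) making the identification between the recursion $\boldsymbol{x}_i = \Phi_\theta(\boldsymbol{x}_{i-1})$ and the gradient step precise --- or, equivalently, phrasing the whole argument around a Lyapunov/energy function for the fixed-point map so that the smoothness and PL hypotheses attach to the correct object --- and (ii) the value-to-iterate passage, which is exactly the point where the PL condition by itself is insufficient and the extra summability argument above is required.
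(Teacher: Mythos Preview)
Your argument follows the same descent-lemma-plus-PL skeleton as the paper, so the overall approach matches. The one structural difference is the choice of potential: the paper does not apply PL to $\Phi_\theta$ itself but instead introduces the fixed-point residual $f(\boldsymbol{x}) = \Vert \Phi_\theta(\boldsymbol{x}) - \boldsymbol{x}\Vert^2$, notes that $f^\ast = f(\boldsymbol{x}^\ast) = 0$ at the fixed point, and then runs exactly your L-smooth descent inequality and PL substitution on $f$ to get $f(\boldsymbol{x}_K)\to 0$. That choice is a little more natural here because $\Phi_\theta$ is the composed map $\mathcal{G}\circ\mathcal{F}_\theta:\mathbb{R}^n\to\mathbb{R}^n$, so treating it directly as a scalar potential (as you do) requires the identification you flag in caveat~(i); the residual $f$ sidesteps that by being scalar-valued by construction and having its minimizers coincide with the fixed points of $\Phi_\theta$. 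Your Cauchy-sequence upgrade from value convergence to iterate convergence is an addition the paper does not carry out --- it stops at $f(\boldsymbol{x}_K)\to 0$ and then asserts the fixed-point conclusion. Finally, the gap you identify in~(i), namely that the recursion $\boldsymbol{x}_i=\Phi_\theta(\boldsymbol{x}_{i-1})$ is a fixed-point iteration and not literally a gradient step on either potential, is present in the paper's proof as well; neither argument closes it, so your self-diagnosis is accurate.
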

\noindent\textbf{Proof.} See Appendix A.4.

\begin{theorem}[\textbf{Gradient Equivalence of Explicit Unrolling and Implicit Differentiation Methods}]
\label{thm: equiv}

Assume that after infinite $K$ iterations, $\boldsymbol{x}_K$ converges to a fixed point \(\boldsymbol{x}^*\),:
\begin{align}
    &\lim_{K\to \infty} \boldsymbol{x}_K =  \boldsymbol{x}^*, &\boldsymbol{x}^* = \mathcal{G}\bigl(\mathcal{F}_\theta(\boldsymbol{x}^*)\bigr) = \Phi_\theta(\boldsymbol{x}^*). 
\end{align}

Then for any smooth loss $\mathcal{L}$, if the spectral radius holds $\rho(J_{\Phi_\theta}|\boldsymbol{x}^*) <1$, the loss gradients sasitfy:
\begin{align}
\small
  \frac{\partial \mathcal{L}}{\partial \theta}
  &= \lim_{K \to \infty}\frac{\partial \mathcal{L}}{\partial \boldsymbol{x}_K}  \Sigma_{i=1}^{K} \left(\left(\prod_{j=i+1}^{K} J_{\Phi_\theta}|\boldsymbol{x}_{j-1}\right) 
        \frac{\partial \Phi_\theta(\boldsymbol{x}_{i-1})}{\partial\theta}\right)\\
  &=\frac{\partial \mathcal{L}}{\partial \boldsymbol{x}^*}(I-J_{\Phi_\theta}|_{\boldsymbol{x}^*})^{-1}\frac{\partial \Phi_\theta(\boldsymbol{x}^*)}{\partial \theta}.
\end{align}

\end{theorem}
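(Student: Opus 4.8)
The plan is to establish the two displayed equalities separately and then chain them. The first equality is just Theorem~\ref{thm: theorem_explicit} with the limit $K\to\infty$ applied, combined with Lemma~\ref{lemma: pl}, which guarantees that $\boldsymbol{x}_K\to\boldsymbol{x}^*$; since the loss $\mathcal{L}$ and $\Phi_\theta$ are smooth, all the Jacobians $J_{\Phi_\theta}|_{\boldsymbol{x}_{j-1}}$, the partials $\partial\Phi_\theta(\boldsymbol{x}_{i-1})/\partial\theta$, and $\partial\mathcal{L}/\partial\boldsymbol{x}_K$ converge to their values at $\boldsymbol{x}^*$ by continuity. So the real content is the second equality, which says that the limit of the unrolled sum collapses to the single Neumann-inverse expression of Theorem~\ref{thm: implicit}.

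For the second equality I would argue as follows. In the limit, every Jacobian factor in the product $\prod_{j=i+1}^K J_{\Phi_\theta}|_{\boldsymbol{x}_{j-1}}$ converges to the \emph{same} matrix $J := J_{\Phi_\theta}|_{\boldsymbol{x}^*}$, so the product over the $K-i$ factors tends to $J^{K-i}$, and likewise $\partial\Phi_\theta(\boldsymbol{x}_{i-1})/\partial\theta \to \partial\Phi_\theta(\boldsymbol{x}^*)/\partial\theta =: B$ and $\partial\mathcal{L}/\partial\boldsymbol{x}_K \to \partial\mathcal{L}/\partial\boldsymbol{x}^*$. Reindexing the sum by $k = K-i$, the inner sum becomes $\sum_{k=0}^{K-1} J^{k} B$. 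Taking $K\to\infty$ and invoking the spectral-radius hypothesis $\rho(J)<1$, the Neumann series $\sum_{k=0}^\infty J^{k}$ converges to $(I-J)^{-1}$, giving
\begin{align}
\frac{\partial \mathcal{L}}{\partial \theta}
  = \frac{\partial \mathcal{L}}{\partial \boldsymbol{x}^*}\left(\sum_{k=0}^{\infty} J^{k}\right) B
  = \frac{\partial \mathcal{L}}{\partial \boldsymbol{x}^*}(I-J)^{-1}\frac{\partial \Phi_\theta(\boldsymbol{x}^*)}{\partial \theta},
\end{align}
which is exactly the implicit-gradient formula of Theorem~\ref{thm: implicit}.

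The main obstacle is making the interchange of the two limiting operations rigorous: in the finite-$K$ expression the Jacobians are evaluated at the \emph{distinct} iterates $\boldsymbol{x}_{j-1}$, not at $\boldsymbol{x}^*$, so I cannot literally write $J^{K-i}$ before passing to the limit. I would handle this by splitting the sum at a cutoff $M$: for the tail indices $i \le M$ (equivalently $k = K-i$ large) the product of $K-i$ Jacobians has operator norm bounded, for $K$ large, by something like $(\rho(J)+\varepsilon)^{K-i}$ using submultiplicativity of an operator norm adapted to $J$ and continuity of the spectral radius, so that tail is uniformly summable and its contribution is controlled; for the finitely many leading indices $i>M$ we use $\boldsymbol{x}_{j-1}\to\boldsymbol{x}^*$ and continuity of all the maps involved to pass to the limit term by term. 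A clean alternative is to avoid the double limit entirely: apply the Implicit Function Theorem directly to $\mathcal{H}_\theta(\boldsymbol{x}^*,\theta)=\boldsymbol{x}^*-\Phi_\theta(\boldsymbol{x}^*)=0$ (the Jacobian $I-J$ is invertible precisely because $\rho(J)<1$), which yields $\partial\boldsymbol{x}^*/\partial\theta = (I-J)^{-1}B$ and hence the right-hand side immediately, and then separately show the unrolled sum converges to the same limit via the Neumann-series identity $(I-J)^{-1}=\sum_{k\ge0}J^k$ together with the dominated-convergence-style tail estimate above. I would present the IFT route for the implicit side (reusing the proof of Theorem~\ref{thm: implicit}) and the telescoping/Neumann argument for the unrolling side, so that only the tail bound needs new work.
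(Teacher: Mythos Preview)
Your proposal is correct and follows essentially the same approach as the paper: replace each Jacobian factor in the product by its limit $J := J_{\Phi_\theta}|_{\boldsymbol{x}^*}$, recognize the resulting sum as the partial Neumann series $\sum_{k=0}^{K-1} J^{k}\,\partial\Phi_\theta(\boldsymbol{x}^*)/\partial\theta$, and use $\rho(J)<1$ to pass to $(I-J)^{-1}$. You are in fact more careful than the paper, which simply asserts $\prod_{j=i+1}^{K} J_{\Phi_\theta}|_{\boldsymbol{x}_{j-1}} \approx J^{K-i}$ without the tail-cutoff or dominated-convergence argument you outline to justify the interchange of limits.
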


\noindent \textbf{Proof.} See Appendix A.5.

Theorem \ref{thm: equiv} states that the gradient obtained by unrolling infinite steps in Equation \ref{eq: main_theorem1_loss} agrees with the one given by implicit differentiation via fixed point in Equation \ref{eq: main_Ltheta}.

\begin{table*}[h]
    \centering
    \setlength{\tabcolsep}{4 pt} 
    \begin{tabular}{l|cc|cc|cc|cc|cc|cc}
        \toprule
        Dataset & \multicolumn{6}{c|}{\textbf{Newsvendor Problem}} & \multicolumn{6}{c}{\textbf{Bipartite Matching Problem}} \\
        \midrule
        Scalability & \multicolumn{2}{c|}{Small}& \multicolumn{2}{c|}{Mid} & \multicolumn{2}{c|}{Large} & \multicolumn{2}{c|}{Small} & \multicolumn{2}{c|}{Mid} & \multicolumn{2}{c}{Large} \\
        \midrule
        Decision variable& \multicolumn{2}{c|}{10} & \multicolumn{2}{c|}{50}& \multicolumn{2}{c|}{100} & \multicolumn{2}{c|}{16} & \multicolumn{2}{c|}{225}& \multicolumn{2}{c}{900} \\
        Constraints & \multicolumn{2}{c|}{32} & \multicolumn{2}{c|}{152} & \multicolumn{2}{c|}{302} & \multicolumn{2}{c|}{57} & \multicolumn{2}{c|}{706} & \multicolumn{2}{c}{2761} \\
        Jacobian matrix & \multicolumn{2}{c|}{$32\times 32$} & \multicolumn{2}{c|}{$152 \times 152$} & \multicolumn{2}{c|}{$302 \times 302$} & \multicolumn{2}{c|}{$57 \times 57$}& \multicolumn{2}{c|}{$706 \times 706$} & \multicolumn{2}{c}{2761 $\times$ 2761} \\
        \midrule
        Metrics & RMSE & Time & RMSE& Time & RMSE& Time & RMSE & Time & RMSE& Time & RMSE& Time \\
        \midrule
        PTO & 12.771  & - & 12.747 & -  & 12.684 & -   & 0.412 & -   & 0.232 & -   & 0.190 & -  \\
        S-DFL & 12.245 & -  & 12.536 & -  & 12.649 & -   & 0.408 & -   & 0.231 & -   & 0.187 & -   \\
       \underline{\textbf{R-DFL-U}} & 8.983 & 135 & 9.173  & 369 & 9.343 & 422  & \underline{\textbf{0.396}} & 65  & 0.222 & 432  & \underline{\textbf{0.170}} &  2704 \\
        \underline{\textbf{R-DFL-I}} & \underline{\textbf{8.831}}  & \underline{\textbf{118}} & \underline{\textbf{9.106}}  & \underline{\textbf{254}} & \underline{\textbf{9.327}}  & \underline{\textbf{369}} & 0.398  & \underline{\textbf{26}}  & \underline{\textbf{0.220}} & \underline{\textbf{65}} & 0.171  & \underline{\textbf{1867}}  \\
        \bottomrule
    \end{tabular}
    \caption{Performance of R-DFL framework with explicit unrolling and implicit differentiation methods on the newsvendor and bipartite matching problems. Jacobian size indicates dimension of matrix $J_{\mathcal{G}}|_{\boldsymbol{c}_{i+1}}$). Unit for time: seconds.}
\label{tab: large table}
\end{table*}

\section{Numerical Experiments}
We evaluate the effectiveness of the proposed R-DFL on two problems using various datasets at different scales: a modified classical multi-product newsvendor problem (MPNP) on a synthetic dataset and a bipartite matching problem (BMP) on a real-world dataset. All experiments are repeated 5 times with different random seeds, with average results reported.

\subsection{Benchmark Problems and Datasets}
\subsubsection{Multi-Product Newsvendor Problem with Synthetic Data}
We extend the classical MPBP \citep{donti2017task, cristian2023end, turken2012multi} to a recursive form (R-MPNP) with two stakeholders (retailers and suppliers). Suppliers determine order costs $\boldsymbol{c} \in \mathbb{R}^n$ responding to both retailer order quantities $\boldsymbol{x} \in \mathbb{R}^n$ and contextual features $\boldsymbol{v} \in \mathbb{R}^d$. Retailers determine optimal order quantities $\boldsymbol{x}^*$ while facing adaptive cost parameters set by suppliers. See Appendix C.1 for the mathematical formulations. Synthetic datasets across three scales ($n=10,50,100$) are generated, corresponding to Jacobian matrices $J_\mathcal{G}|_{\boldsymbol{c}_{i+1}}$ of size $32\times32$, $152\times152$, and $302\times302$ respectively in gradient computation. 

\subsubsection{Bipartite Matching Problem with Real-World Data}
Bipartite matching has broad applications in resource allocation, such as ride-hailing systems and housing assignments \citep{Wilder2019, benabbou2018diversity, zhao2019preference}. We examine the recursive BMP (R-BMP) in ride-hailing, which involves two parties: a centralized platform and individual participants (drivers and riders). The platform generates an allocation decision $\boldsymbol{x} \in \mathbb{R}^n$ to optimize driver-rider matching while estimating potential participant regret $\boldsymbol{c} \in \mathbb{R}^n$. Upon receiving their assignments, both drivers and riders experience realized regret based on the specific matches $\boldsymbol{x}$, creating a feedback loop. See Appendix C.2 for mathematical formulations. We evaluate R-DFL using real-world matching data from the NYC TLC trip dataset, under problem scales of ($4, 15, 30$) matches, corresponding to sizes of optimization problems $n = 16, 225, 900$ and Jacobian matrices $57\times57$, $706\times706$, and $2761\times2761$, respectively.

\subsection{Baselines}
We evaluate our R-DFL framework with two differentiation methods against the state-of-the-art S-DFL framework and PTO framework. All frameworks use identical predictive architectures and equivalent convex programming formulations. The baselines are:
\begin{itemize} 
    \item \textbf{R-DFL-U}: The recursive DFL with the explicit unroll methods through finite unrolling steps.
    \item \textbf{R-DFL-I}: The recursive DFL with the implicit differentiation methods backpropagating at the equilibrium point.
    \item \textbf{S-DFL}: The sequential DFL where the predictive model uses only exogenous features, without decision feedback.
    \item \textbf{PTO}: The conventional two-step approach with separate prediction and optimization, where the predictive model uses only exogenous features, without decision feedback.
\end{itemize}

\subsection{Evaluation Metrics}
We evaluate the baselines across two dimensions: (1) Decision accuracy, (2) Computational efficiency. While all baselines are evaluated on accuracy, the efficiency metrics focus specifically on the two differentiation methods of R-DFL.
\begin{itemize}
    \item \textbf{Accuracy}: RMSE loss for the final decision $\boldsymbol{x}$.
    \item \textbf{Efficiency}: Average training time per epoch (in seconds).
\end{itemize}

\subsection{Results}

The results in Table \ref{tab: large table} reveal three findings: (1) The proposed R-DFL framework with two differentiation methods consistently outperforms the S-DFL and PTO baselines across all datasets in accuracy, demonstrating that modeling of the recursive structure will significantly enhance the final decision quality in recursive decision-making problems with bidirectional feedback. (2) While maintaining comparable accuracy, the implicit differentiation methods achieve significantly less training time per epoch, 1.5 times faster than explicit unrolling methods in the large-scale problem of bipartite matching, highlighting their superior computational efficiency. (3) The predictive model in the newsvendor problem involves more predictive parameters than the bipartite matching problem, leading to longer training time despite smaller Jacobian matrices. Full table see Appendix D.3.

\subsection{Accuracy Comparison of the R-DFL}

Figure \ref{fig: qq plot} compares the decision result distributions of the R-DFL framework using two differentiation methods across training and test datasets in the newsvendor problem. The QQ plots reveal strong alignment between the two methods, particularly for small-scale datasets, indicating consistent decision outcomes. While minor deviations emerge in the largest dataset, the overall agreement suggests the robustness of both differentiation methods across varying problem scales, and either differentiation method can reliably support the R-DFL framework in practical applications.

\begin{figure}[ht]
    \centering
    \includegraphics[width=0.95\linewidth]{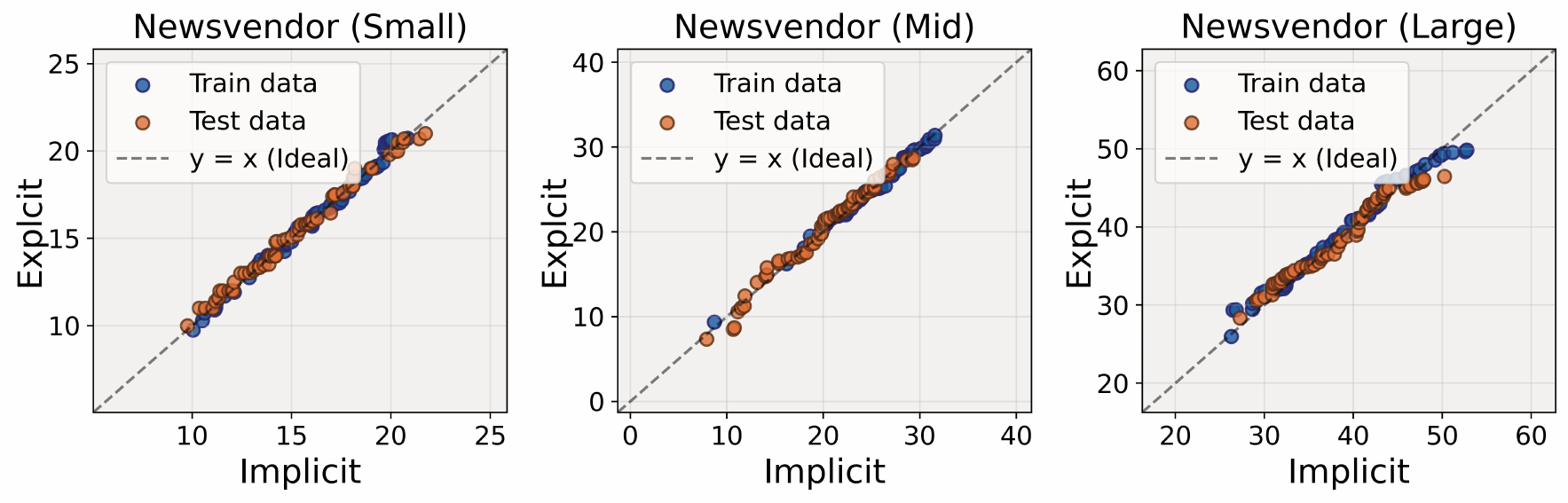}
    \caption{Accuracy comparison between R-DFL-U and R-DFL-I on newsvendor dataset across three scales.}
    \label{fig: qq plot}
\end{figure}

\subsection{Sensitivity Analysis on Unrolling Layers}
Figure \ref{fig: efficiency} presents a sensitivity analysis examining the impact of varying numbers of unrolling layers $\{5, 10, 15,20,25\}$ of both differentiation methods on the two datasets. Results demonstrate that: (1) Regarding accuracy, both the explicit and implicit methods achieve comparable RMSE values. (2) Regarding computational efficiency, the implicit methods exhibit superior performance compared to explicit unrolling with consistently less training time, particularly with the increment of unrolling layers. (3) Specifically, increasing the number of unrolling layers for explicit unrolling methods yields marginal accuracy improvements and results in substantial computational overhead, significantly prolonging training time.

\begin{figure}[h]
    \centering
    \includegraphics[width=1.0\linewidth]{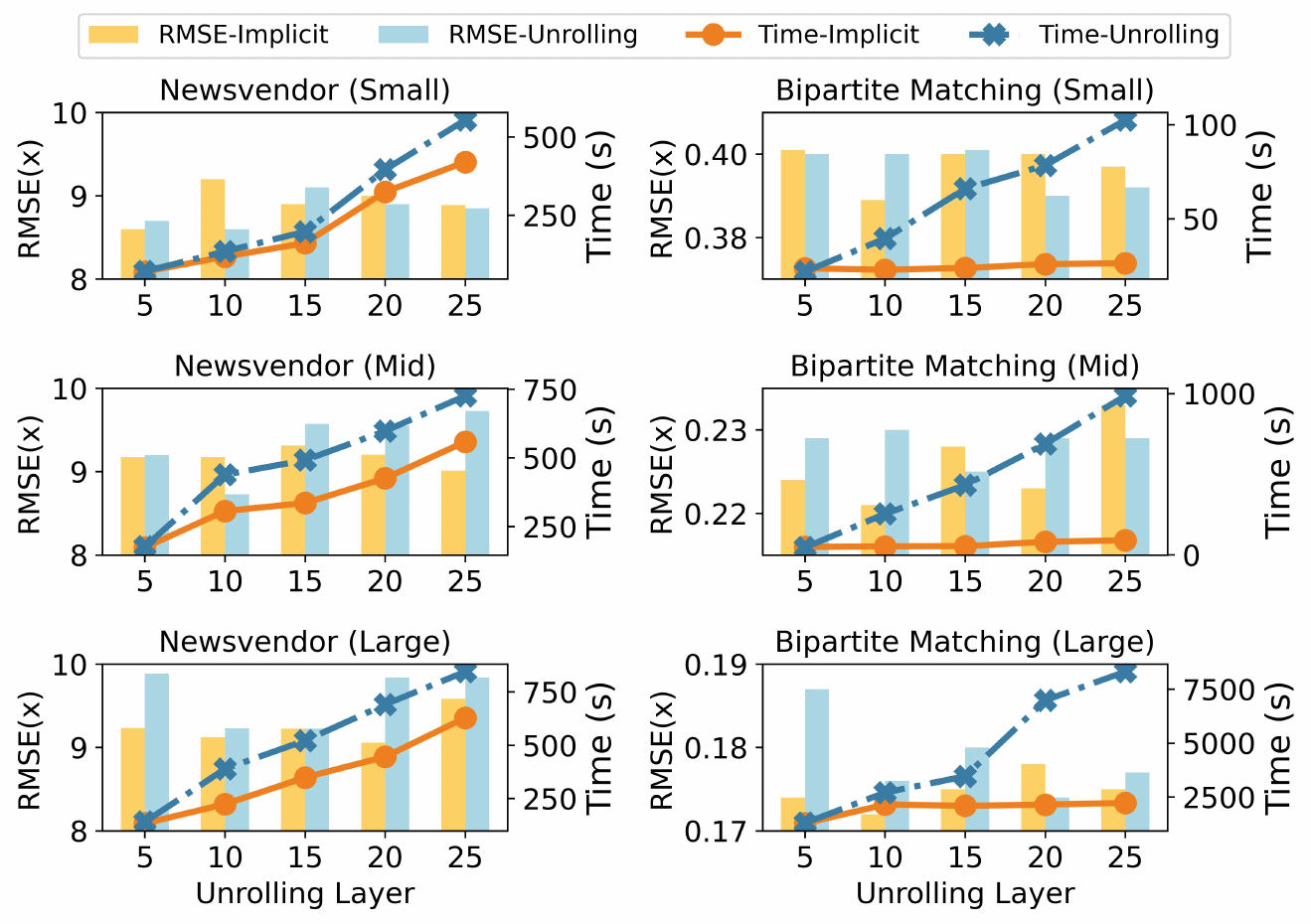}
    \caption{Sensitivity analysis of unrolling layers.}
    \label{fig: efficiency}
\end{figure}

\subsection{Robustness Check of the R-DFL}

Table \ref{tab: accuracy} reports results of three widely-used predictive models in two datasets, including \texttt{LSTM} \citep{hochreiter1997long}, \texttt{RNN} \citep{elman1990finding}, and \texttt{Transformer} \citep{vaswani2017attention} with 10 unrolling layers. The R-DFL framework with both differentiation methods shows higher accuracy than S-DFL and PTO frameworks in both experiments, representing the effectiveness and robustness of the proposed R-DFL framework. Full table see Appendix D.3.

\begin{table}[ht]
    \centering
    \small
    \setlength{\tabcolsep}{5 pt} 
    \begin{tabular}{l|cc|cc|cc}
        \toprule
        \multicolumn{7}{c}{\textbf{Newsvendor Problem (Large)}} \\
        \midrule
        Model & \multicolumn{2}{c|}{LSTM} & \multicolumn{2}{c|}{RNN} & \multicolumn{2}{c}{Transformer} \\
        \midrule
        Metrics & RMSE & Time & RMSE& Time & RMSE& Time \\
        \midrule
        PTO     & 12.034 & - & 12.842 & - & 14.071 & -\\
        S-DFL  & 12.693 & -& 12.583 & - & 14.040 & -\\
        \underline{\textbf{R-DFL-U}} & 10.112 & 531 & 10.872 & 510 & \underline{\textbf{11.231}} & 561 \\
        \underline{\textbf{R-DFL-I}} & \underline{\textbf{10.104}} & \underline{\textbf{355}}& \underline{\textbf{10.810}} & \underline{\textbf{340}} & 11.332 & \underline{\textbf{360}}\\
        \midrule
        \multicolumn{7}{c}{\textbf{Bipartite Matching Problem (Large)}} \\
        \midrule
        Model & \multicolumn{2}{c|}{LSTM} & \multicolumn{2}{c|}{RNN} & \multicolumn{2}{c}{Transformer} \\
        \midrule
        Metrics & RMSE & Time & RMSE& Time & RMSE& Time \\
        \midrule
        PTO     & 0.219 & - & 0.185 & -  & 0.193 & -\\
        S-DFL  & 0.230 & - & 0.187 & -  & 0.188 & - \\
        \underline{\textbf{R-DFL-U}} & 0.176 & 2704 & 0.176 & 2821  & 0.172 & 2821 \\
        \underline{\textbf{R-DFL-I}} & \underline{\textbf{0.174}} & \underline{\textbf{2093}} & \underline{\textbf{0.174}} & \underline{\textbf{2079}} & \underline{\textbf{0.166}} & \underline{\textbf{2078}}\\
        \bottomrule
    \end{tabular}
    \caption{Comparison with different predictive models.}
    \label{tab: accuracy}
\end{table}

\section{Conclusions}
While S-DFL has shown promise in improving decision quality by integrating optimization models in deep learning, its sequential structure fails to capture the critical interactions between prediction and optimization in closed-loop decision-making problems, hence limiting its applications. This paper introduces R-DFL, a novel framework that establishes bidirectional interaction between optimization and prediction, together with two mathematically-derived differentiation methods: explicit unrolling methods via auto-differentiation and implicit differentiation methods based on fixed-point analysis. Theoretical and numerical analyses demonstrate that considering the recursive structure substantially enhances the decision quality in problems with bidirectional feedback, regardless of prediction models. Furthermore, this paper reveals a trade-off between the two differentiation methods: while explicit unrolling methods offer a more straightforward implementation, implicit differentiation methods achieve superior computational efficiency. Beyond these specific applications, R-DFL represents a significant advancement in decision-focused learning by creating a truly unified prediction-prescription pipeline capable of modeling complex bidirectional systems, with promising extensions to broader classes of closed-loop decision-making problems under uncertainty. Future directions include extending R-DFL to handle stochastic recursive environments and developing a more versatile framework capable of addressing integer problems.

\section{Acknowledgments}
The work described in this paper is supported by grants from the Research Grants Council of the Hong Kong Special Administrative Region, China (Project No. PolyU/15206322 and PolyU/15227424), the Otto Poon Charitable Foundation Smart Cities Research Institute (SCRI) (U-CDC4), and the Research Centre for Digital Transformation of Tourism (RCDTT) (1-BBGU) at the Hong Kong Polytechnic University. The contents of this article reflect the views of the authors, who are responsible for the facts and accuracy of the information presented herein.
\bibliography{main}

\clearpage

\section{Appendix A. Gradient Derivation of R-DFL} 
This section presents the derivation of the gradients for explicit unrolling and implicit differentiation methods in the proposed R-DFL framework, the gradient equivalence between the two differentiation methods, and the derivation of the Jacobian Matrix.

\subsection{A.1 Proof of Theorem 1}
\setcounter{theorem}{0}

\begin{theorem}[\textbf{Gradient of Explicit Unrolling Methods}]
\label{thm: theorem_explicit}
Define the Jacobian of function $\Phi_\theta$ at $x_i$ to be:
    \begin{align}
    \label{eq: J}
        J_{\Phi_\theta}|_{\boldsymbol{x}_{i}} &= 
        \frac{\partial \boldsymbol{x}_{i+1}}{\partial \boldsymbol{x}_{i}} =  
        \frac{\partial \Phi_\theta(\boldsymbol{x}_i)}{\partial \boldsymbol{x}_i}, \ \forall i \in \{1,...,K-1\},
    \end{align}

And $J_{\Phi_\theta}|_{\boldsymbol{x}_{i}}$ will be obtained as follows:
    \begin{align}
    \label{eq: expand J_theta_app}
        J_{\Phi_\theta}|_{\boldsymbol{x}_{i}}  =  J_{\mathcal{G}}|_{\boldsymbol{c}_{i}} \cdot J_{\mathcal{F}_\theta}|_{\boldsymbol{x}_{i}}.   \ \forall i \in \{1,...,K-1\}
    \end{align}
    
Then the loss gradient $\mathcal{L}$ w.r.t. ($\theta$) at the final $K^{th}$ layer is given as:
    \begin{equation}
    \label{eq: theorem1_loss}
    \frac{\partial \mathcal{L}}{\partial \theta} = \frac{\partial \mathcal{L}}{\partial \boldsymbol{x}_K}  \Sigma_{i=1}^{K} \left(\left(\prod_{j=i+1}^{K} J_{\Phi_\theta}|_{\boldsymbol{x}_{j-1}}\right) 
        \frac{\partial \Phi_\theta(\boldsymbol{x}_{i-1})}{\partial\theta}\right).
    \end{equation}
\end{theorem}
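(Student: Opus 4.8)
The plan is to unroll the computational graph defined by the recursion $\boldsymbol{x}_i = \Phi_\theta(\boldsymbol{x}_{i-1})$ and apply the multivariate chain rule, carefully tracking that $\theta$ enters at \emph{every} layer. First I would set up notation: treat $\mathcal{L}$ as a function of $\boldsymbol{x}_K$ alone (the final output), and observe that $\boldsymbol{x}_K$ depends on $\theta$ both directly through $\Phi_\theta$ applied at step $K$ and indirectly through $\boldsymbol{x}_{K-1}$, which itself depends on $\theta$ in the same twofold way, and so on down to $\boldsymbol{x}_0$ (which is constant, so contributes nothing). This is exactly the structure of backpropagation through a weight-shared recurrent network.

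The core step is a recursive identity for $\frac{\partial \boldsymbol{x}_K}{\partial \theta}$. Writing $\boldsymbol{x}_i = \Phi_\theta(\boldsymbol{x}_{i-1})$ and differentiating totally with respect to $\theta$ gives
\begin{align}
\frac{\partial \boldsymbol{x}_i}{\partial \theta} = J_{\Phi_\theta}|_{\boldsymbol{x}_{i-1}} \frac{\partial \boldsymbol{x}_{i-1}}{\partial \theta} + \frac{\partial \Phi_\theta(\boldsymbol{x}_{i-1})}{\partial \theta},
\end{align}
where the first term is the indirect contribution (through the input $\boldsymbol{x}_{i-1}$) and the second is the direct contribution (through the explicit $\theta$-dependence of the map, holding its input fixed). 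With the base case $\frac{\partial \boldsymbol{x}_0}{\partial \theta} = 0$, I would unfold this linear recursion by induction on $i$, yielding
\begin{align}
\frac{\partial \boldsymbol{x}_K}{\partial \theta} = \sum_{i=1}^{K} \left( \prod_{j=i+1}^{K} J_{\Phi_\theta}|_{\boldsymbol{x}_{j-1}} \right) \frac{\partial \Phi_\theta(\boldsymbol{x}_{i-1})}{\partial \theta},
\end{align}
with the usual convention that the empty product (when $i=K$) is the identity. Left-multiplying by $\frac{\partial \mathcal{L}}{\partial \boldsymbol{x}_K}$ gives Equation~\ref{eq: theorem1_loss}. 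The factorization $J_{\Phi_\theta}|_{\boldsymbol{x}_i} = J_{\mathcal{G}}|_{\boldsymbol{c}_{i+1}} \cdot J_{\mathcal{F}_\theta}|_{\boldsymbol{x}_i}$ is then just the chain rule applied to $\Phi_\theta = \mathcal{G} \circ \mathcal{F}_\theta$, using $\boldsymbol{c}_{i+1} = \mathcal{F}_\theta(\boldsymbol{x}_i)$, which I would state separately (and defer its detailed justification to the Jacobian derivation referenced as Appendix A.5).

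The main obstacle is bookkeeping rather than deep mathematics: I must be scrupulous about the distinction between the \emph{total} derivative $\frac{\partial \boldsymbol{x}_i}{\partial \theta}$ and the \emph{partial} derivative $\frac{\partial \Phi_\theta(\boldsymbol{x}_{i-1})}{\partial \theta}$ (which freezes the input argument), since conflating them is the classic error in unrolled-differentiation arguments; and I must get the index ranges of the product $\prod_{j=i+1}^{K}$ and the empty-product convention exactly right so the $i=K$ term reduces to the pure direct contribution $\frac{\partial \Phi_\theta(\boldsymbol{x}_{K-1})}{\partial\theta}$. A secondary technical point is that all $K$ layers share the same $\theta$, so the direct-contribution term appears at every level of the recursion with the \emph{same} parameter — this is what produces the sum over $i$ rather than a single term — and I would emphasize this when unrolling. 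Differentiability of $\Phi_\theta$ (hence of both $\mathcal{F}_\theta$ and the optimization map $\mathcal{G}$, guaranteed by the single-valued/differentiable/convex assumption and the "parameter only in objective" assumption) is what licenses every application of the chain rule, so I would note it as a standing hypothesis at the outset.
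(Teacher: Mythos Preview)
Your proposal is correct and mirrors the paper's own proof almost step for step: both derive the one-step recursion $\frac{\partial \boldsymbol{x}_i}{\partial \theta} = J_{\Phi_\theta}|_{\boldsymbol{x}_{i-1}}\frac{\partial \boldsymbol{x}_{i-1}}{\partial \theta} + \frac{\partial \Phi_\theta(\boldsymbol{x}_{i-1})}{\partial \theta}$ with base case $\frac{\partial \boldsymbol{x}_0}{\partial \theta}=0$, unroll it into the sum-of-Jacobian-products formula, and then premultiply by $\frac{\partial \mathcal{L}}{\partial \boldsymbol{x}_K}$. Your explicit attention to the total-versus-partial derivative distinction, the empty-product convention at $i=K$, and the weight-sharing across layers is more careful than the paper's write-up but does not change the argument.
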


\begin{proof}
    The loss gradient of $\mathcal{L}$ w.r.t. parameter $\theta$ is given as:
    \begin{align}
    \label{eq: dl/dtheta}
        \frac{\partial \mathcal{L}}{\partial \theta} = 
        \frac{\partial \mathcal{L}}{\partial \boldsymbol{x}_K}
        \frac{\partial \boldsymbol{x}_K}{\partial \theta},
    \end{align}

    In the explicit unrolling method, we have:
    \begin{align}
    \label{eq: x = phi(x)}
        \boldsymbol{x}_i = \Phi_\theta(\boldsymbol{x}_{i-1}), \forall  \ i = 1,..., K.
    \end{align}
    
    To compute $\frac{\partial\boldsymbol{x}_K}{\partial \theta}$, we differentiate Equation \ref{eq: x = phi(x)}
     w.r.t. $\theta$:
    \begin{align}
    \label{eq: dxi}
        \frac{\partial \boldsymbol{x}_i}{\partial \theta} = 
        \frac{\partial \Phi_\theta(\boldsymbol{x}_{i-1})}{\partial \boldsymbol{x}_{i-1}} 
        \frac{\partial \boldsymbol{x}_{i-1}}{\partial \theta} + 
        \frac{\partial \Phi_\theta(\boldsymbol{x}_{i-1})}{\partial\theta}.
    \end{align}

    Expand Equation \ref{eq: dxi} recursively for $i=0,1,...,K$, forming a chain structure as follows:
    \begin{align}
        \frac{\partial \boldsymbol{x}_0}{\partial \theta} =&0, \\
        \frac{\partial \boldsymbol{x}_1}{\partial \theta} =& 
        \frac{\partial \Phi_\theta(\boldsymbol{x}_{0})}{\partial\theta}, \\
        \frac{\partial \boldsymbol{x}_2}{\partial \theta} =& 
        \frac{\partial \Phi_\theta(\boldsymbol{x}_{1})}{\partial \boldsymbol{x}_{1}}
        \frac{\partial \Phi_\theta(\boldsymbol{x}_{0})}{\partial \theta} + 
        \frac{\partial \Phi_\theta(\boldsymbol{x}_{1})}{\partial\theta}, \\
        \frac{\partial \boldsymbol{x}_3}{\partial \theta} =& 
        \frac{\partial \Phi_\theta(\boldsymbol{x}_{2})}{\partial \boldsymbol{x}_{2}}
        \frac{\partial \Phi_\theta(\boldsymbol{x}_{1})}{\partial \boldsymbol{x}_{1}}
        \frac{\partial \Phi_\theta(\boldsymbol{x}_{0})}{\partial \theta} +  \\
        &\frac{\partial \Phi_\theta(\boldsymbol{x}_{2})}{\partial \boldsymbol{x}_{1}}
        \frac{\partial \Phi_\theta(\boldsymbol{x}_{1})}{\partial \theta}+
        \frac{\partial \Phi_\theta(\boldsymbol{x}_{2})}{\partial\theta}, \nonumber \\ 
        \cdots \nonumber
    \end{align}
    
    Through the iterations we have $\frac{\partial \boldsymbol{x}_K}{\partial \theta}$:
    \begin{align}
    \label{eq: dxk expand}
        \frac{\partial \boldsymbol{x}_K}{\partial \theta} 
        = & \Sigma_{i=1}^{K}\bigl( (\prod_{j=i+1}^{K} \frac{\partial \Phi_\theta(\boldsymbol{x}_{j-1})}{\partial \boldsymbol{x}_{j-1}} )
        \frac{\partial \Phi_\theta(\boldsymbol{x}_{i-1})}{\partial\theta} \bigr), 
    \end{align}

    Then substitute Equation \ref{eq: dxk expand} with Equation \ref{eq: J}:
    \begin{align}
    \label{eq: dxk/dtheta}
        \frac{\partial \boldsymbol{x}_K}{\partial \theta} 
        = \Sigma_{i=1}^{K}\bigl((\prod_{j=i+1}^{K} J_{\Phi_\theta}|_{\boldsymbol{x}_{j-1}}) 
        \frac{\partial \Phi_\theta(\boldsymbol{x}_{i-1})}{\partial\theta} \bigr),
    \end{align}
    where the product term $\prod_{j=i+1}^{K} J_{\Phi_\theta}|\boldsymbol{x}_{j-1}$ represents the indirect gradient contributions from the path $\boldsymbol{x}_{K} \to \boldsymbol{x}_i$, and the second term $\frac{\partial \Phi_\theta(\boldsymbol{x}_{i-1})}{\partial\theta}$ denotes the direct dependence of $\boldsymbol{x}_i$ on $\theta$ at each unrolling step $i$. 

    The loss gradient of $\mathcal{L}$ w.r.t. parameter $\theta$ is given as:
    \begin{align}
    \label{eq: dl/dtheta}
        \frac{\partial \mathcal{L}}{\partial \theta} = 
        \frac{\partial \mathcal{L}}{\partial \boldsymbol{x}_K}\Sigma_{i=1}^{K}\left(\left(\prod_{j=i+1}^{K} J_{\Phi_\theta}|_{\boldsymbol{x}_{j-1}}\right) 
        \frac{\partial \Phi_\theta(\boldsymbol{x}_{i-1})}{\partial\theta}\right).
    \end{align}
    
\end{proof}

\subsection{A.2 Proof of Theorem 2}
\begin{theorem}[\textbf{Gradient of Implicit Differentiation Methods}]
\label{thm: implicit}
The loss gradient back propagates at the equilibrium point $\boldsymbol{x}^*$ w.r.t. $\theta$ is given as:
\begin{align}
    \label{eq: Ltheta}
        \frac{\partial \mathcal{L}}{\partial \theta} = 
        \frac{\partial \mathcal{L}}{\partial \boldsymbol{x}^*}
        (I - J_{\Phi_\theta}|_{\boldsymbol{x^*}})^{-1}
        \frac{\partial \Phi_{\theta}(\boldsymbol{x}^*)}{\partial \theta},
    \end{align}
    where $J_{\Phi_\theta}|_{\boldsymbol{x^*}}$ is the Jacobian of $\Phi_\theta$ evaluated at $\boldsymbol{x}^*$.
\end{theorem}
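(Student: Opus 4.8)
The plan is to apply the Implicit Function Theorem to the fixed-point condition $\mathcal{H}_\theta(\boldsymbol{x}^*) = \boldsymbol{x}^* - \Phi_\theta(\boldsymbol{x}^*) = \boldsymbol{0}$, treating $\boldsymbol{x}^*$ as an implicit function of $\theta$. First I would note that $\boldsymbol{x}^*$ depends on $\theta$ both directly (through the parameter appearing inside $\Phi_\theta$) and indirectly (through its own recursive occurrence). Differentiating the identity $\boldsymbol{x}^* = \Phi_\theta(\boldsymbol{x}^*)$ totally with respect to $\theta$ gives
\begin{align}
\label{eq:implicit_diff_step}
\frac{\partial \boldsymbol{x}^*}{\partial \theta}
= J_{\Phi_\theta}|_{\boldsymbol{x}^*}\,\frac{\partial \boldsymbol{x}^*}{\partial \theta}
+ \frac{\partial \Phi_\theta(\boldsymbol{x}^*)}{\partial \theta},
\end{align}
where $J_{\Phi_\theta}|_{\boldsymbol{x}^*} = \partial \Phi_\theta(\boldsymbol{x})/\partial \boldsymbol{x}$ evaluated at $\boldsymbol{x}^*$ is the same Jacobian defined in Theorem~\ref{thm: theorem_explicit}, and $\partial \Phi_\theta(\boldsymbol{x}^*)/\partial \theta$ is the partial derivative holding the input argument fixed.

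Next I would collect the $\partial \boldsymbol{x}^*/\partial \theta$ terms: $(I - J_{\Phi_\theta}|_{\boldsymbol{x}^*})\,\partial \boldsymbol{x}^*/\partial \theta = \partial \Phi_\theta(\boldsymbol{x}^*)/\partial \theta$. Provided $I - J_{\Phi_\theta}|_{\boldsymbol{x}^*}$ is invertible — which is exactly the nondegeneracy hypothesis the Implicit Function Theorem requires, and which is guaranteed under the spectral-radius condition $\rho(J_{\Phi_\theta}|_{\boldsymbol{x}^*}) < 1$ used later in Theorem~\ref{thm: equiv} — we may solve
\begin{align}
\frac{\partial \boldsymbol{x}^*}{\partial \theta}
= (I - J_{\Phi_\theta}|_{\boldsymbol{x}^*})^{-1}\,\frac{\partial \Phi_\theta(\boldsymbol{x}^*)}{\partial \theta}.
\end{align}
Finally, chaining through the loss via $\partial \mathcal{L}/\partial \theta = (\partial \mathcal{L}/\partial \boldsymbol{x}^*)(\partial \boldsymbol{x}^*/\partial \theta)$ yields Equation~\ref{eq: Ltheta}. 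The step establishing the Jacobian factorization $J_{\Phi_\theta}|_{\boldsymbol{x}^*} = J_{\mathcal{G}}|_{\boldsymbol{c}^*} \cdot J_{\mathcal{F}_\theta}|_{\boldsymbol{x}^*}$ (and hence the well-posedness of differentiating through $\mathcal{G}$) can be invoked from the derivation referenced in Appendix A.5; the single-valued, differentiable, convex assumption on $\mathcal{G}$ together with the parameter-in-objective-only assumption ensures $\Phi_\theta$ is continuously differentiable so the IFT applies.

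The main obstacle — and the only real subtlety — is justifying the invertibility of $I - J_{\Phi_\theta}|_{\boldsymbol{x}^*}$ rigorously rather than assuming it. I would handle this by stating it as a standing regularity assumption (the IFT's Jacobian condition), and observe that it is implied by the contraction-type spectral bound $\rho(J_{\Phi_\theta}|_{\boldsymbol{x}^*}) < 1$ invoked in Lemma~\ref{lemma: pl} and Theorem~\ref{thm: equiv}, since $\rho(J) < 1$ implies $1$ is not an eigenvalue of $J$, so $I - J$ is nonsingular (and moreover $(I-J)^{-1} = \sum_{k=0}^\infty J^k$ converges, foreshadowing the Neumann-series argument of Theorem~\ref{thm: equiv}). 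A secondary, more pedestrian point is to make explicit the distinction between the total derivative $\partial \boldsymbol{x}^*/\partial \theta$ and the partial derivative $\partial \Phi_\theta(\boldsymbol{x}^*)/\partial \theta$ (input frozen), since conflating the two is the usual error in implicit-differentiation arguments; I would fix notation carefully before writing Equation~\ref{eq:implicit_diff_step}.
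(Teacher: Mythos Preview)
Your proposal is correct and follows essentially the same route as the paper's own proof: differentiate the fixed-point identity $\boldsymbol{x}^* = \Phi_\theta(\boldsymbol{x}^*)$ totally in $\theta$, rearrange to isolate $\partial \boldsymbol{x}^*/\partial \theta$, invert $I - J_{\Phi_\theta}|_{\boldsymbol{x}^*}$, and then chain through $\partial \mathcal{L}/\partial \boldsymbol{x}^*$. Your added remarks on the invertibility hypothesis (via the spectral-radius condition) and on distinguishing total versus partial derivatives are more careful than the paper's version, which simply invokes the Implicit Function Theorem without further justification.
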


\begin{proof}

After infinite iterations of $\boldsymbol{x}_{i+1} = \Phi_\theta(\boldsymbol{x}_i)$, the procedure will converge to some fixed output $\boldsymbol{x}^*$, with the property that:
\begin{align}
\label{eq: x* = g(f(x))}
    \boldsymbol{x}^* = \Phi_\theta(\boldsymbol{x}^*).
\end{align}

The loss gradient $\frac{\partial \mathcal{L}}{\partial \theta}$ at $\boldsymbol{x}^*$ will be:

\begin{align}
\label{eq: dldthete_x*}
    \frac{\partial \mathcal{L}}{\partial \theta} 
    &= \frac{\partial \mathcal{L}}{\partial \boldsymbol{x}^*}\frac{
    \partial \boldsymbol{x}^*}{\partial \theta}
\end{align}

To compute $\frac{\partial \boldsymbol{x}^*}{\partial \theta}$, differentiate both sides of Equation \ref{eq: x* = g(f(x))} w.r.t. $\theta$,

\begin{align}
    \frac{\partial \boldsymbol{x}^*}{\partial \theta}= 
    \frac{\partial \Phi_\theta(\boldsymbol{x}^*)}{\partial \theta} + 
    \frac{\partial \Phi_\theta(\boldsymbol{x}^*)}{\partial \boldsymbol{x}^*}
    \frac{\partial \boldsymbol{x}^*}{\partial\theta}.
\end{align}

Then we use the Implicit Function Theorem (IFT), so that we have:

\begin{align}
\label{eq: dx/dtheta}
    \frac{\partial \boldsymbol{x}^*}{\partial \theta}  = (I - \frac{\partial \Phi_\theta(\boldsymbol{x}^*)}{\partial \boldsymbol{x}^*})^{-1} \frac{\partial \Phi_\theta(\boldsymbol{x}^*)}{\partial \theta}.
\end{align}

Then substitute Equation \ref{eq: dx/dtheta} with Equation \ref{eq: J}:

\begin{align}
\label{eq: dx/dtheta_expanded}
\frac{\partial  \boldsymbol{x}^*}{\partial  \theta}  = ( I - J_{\Phi_\theta}|_{\boldsymbol{x}^*} )^{-1} \frac{\partial \Phi_\theta(\boldsymbol{x}^*)}{\partial \theta}.
\end{align}

Substitute Equation Equation \ref{eq: dldthete_x*} with \ref{eq: dx/dtheta_expanded}, the loss gradient $\frac{\partial \mathcal{L}}{\partial \theta}$ will be:

\begin{align}
\label{eq: dldtheta_x*}
    \frac{\partial \mathcal{L}}{\partial \theta}  = \frac{\partial \mathcal{L}}{\partial \boldsymbol{x}^*}(I-J_{\Phi_\theta}|_{\boldsymbol{x}^*})^{-1}
    \frac{\partial \Phi_\theta(\boldsymbol{x}^*)}{\partial \theta}.
\end{align}
\end{proof}

\subsection{A.3 Proof of Lemma 1}
\begin{lemma}[\textbf{Convergence of the Explicit Unrolling Methods}]
\label{lemma: pl}
Suppose the gradient of the differentiable function $\Phi_\theta$ is Lipschitz smooth, and $\Phi_\theta$ satisfies the PL condition with $\mu>0$:  
    \begin{align}
    \label{eq: pl}
        \frac{1}{2} \Vert \nabla \Phi_\theta(\boldsymbol{x}) \Vert^2 \geq \mu(\Phi_\theta(\boldsymbol{x}) - \Phi_\theta^*), 
    \end{align}
    where $\Phi_\theta^* = \underset{\boldsymbol{x}}{\inf} \ \Phi_\theta(\boldsymbol{x}) = \boldsymbol{x}^*$. 
    
Then $\Phi_\theta$ will converge to the fixed point $\boldsymbol{x}^*$ after infinite $K$ iterations $\lim_{K\to \infty} \boldsymbol{x}_K =  \boldsymbol{x}^*$.
\end{lemma}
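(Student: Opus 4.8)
The plan is to read the unrolled recursion $\boldsymbol{x}_i = \Phi_\theta(\boldsymbol{x}_{i-1})$ as a descent scheme for the potential $\Phi_\theta$, establish linear decay of the function values $\Phi_\theta(\boldsymbol{x}_K)-\Phi_\theta^*$ via the classical ``smoothness $+$ PL'' argument, and then upgrade value convergence to iterate convergence $\boldsymbol{x}_K\to\boldsymbol{x}^*$ using the quadratic-growth property implied by PL together with the paper's single-valuedness assumption on $\mathcal{G}$ (hence on $\Phi_\theta$).

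First I would record the descent inequality. Since $\nabla\Phi_\theta$ is $L$-Lipschitz, the standard descent lemma gives, for the step-size $1/L$ update,
\[
\Phi_\theta(\boldsymbol{x}_{i}) \;\le\; \Phi_\theta(\boldsymbol{x}_{i-1}) - \tfrac{1}{2L}\Vert \nabla \Phi_\theta(\boldsymbol{x}_{i-1})\Vert^2 .
\]
Next I would invoke the PL condition to replace $\Vert \nabla \Phi_\theta(\boldsymbol{x}_{i-1})\Vert^2$ by $2\mu\bigl(\Phi_\theta(\boldsymbol{x}_{i-1}) - \Phi_\theta^*\bigr)$, yielding the contraction
\[
\Phi_\theta(\boldsymbol{x}_{i}) - \Phi_\theta^* \;\le\; \bigl(1 - \tfrac{\mu}{L}\bigr)\bigl(\Phi_\theta(\boldsymbol{x}_{i-1}) - \Phi_\theta^*\bigr).
\]
Unrolling over $i=1,\dots,K$ and using $0\le 1-\mu/L<1$ gives $\Phi_\theta(\boldsymbol{x}_K)-\Phi_\theta^* \le (1-\mu/L)^K\bigl(\Phi_\theta(\boldsymbol{x}_0)-\Phi_\theta^*\bigr)\to 0$, so $\Phi_\theta(\boldsymbol{x}_K)\to\Phi_\theta^*$ as $K\to\infty$.

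Then I would convert value convergence into iterate convergence. A well-known consequence of PL for $L$-smooth functions is quadratic growth, $\tfrac{\mu}{2}\,\mathrm{dist}(\boldsymbol{x},\mathcal{X}^*)^2 \le \Phi_\theta(\boldsymbol{x}) - \Phi_\theta^*$, where $\mathcal{X}^*$ is the minimizer set; evaluating at $\boldsymbol{x}_K$ and combining with the linear rate gives $\mathrm{dist}(\boldsymbol{x}_K,\mathcal{X}^*)\to 0$. Because $\mathcal{G}$, and therefore $\Phi_\theta$, is single-valued by assumption, the fixed point $\boldsymbol{x}^*$ is the unique element of $\mathcal{X}^*$, so $\mathrm{dist}(\boldsymbol{x}_K,\mathcal{X}^*)=\Vert\boldsymbol{x}_K-\boldsymbol{x}^*\Vert$ and we conclude $\lim_{K\to\infty}\boldsymbol{x}_K=\boldsymbol{x}^*$.

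The hard part will be exactly this last bridge: PL controls function values and only certifies decay to the optimal value, not uniqueness of the argmin, so passing to the iterates is not automatic. I would handle it either via the quadratic-growth lemma plus single-valuedness as above, or, alternatively, by showing the iterates are Cauchy --- bounding $\Vert\boldsymbol{x}_i-\boldsymbol{x}_{i-1}\Vert$ by $\tfrac1L\Vert\nabla\Phi_\theta(\boldsymbol{x}_{i-1})\Vert$ and summing the geometric series produced by the PL-driven decay of $\Vert\nabla\Phi_\theta(\boldsymbol{x}_{i-1})\Vert^2$. A secondary, more cosmetic point is that the hypothesis refers to a scalar $\inf_{\boldsymbol{x}}\Phi_\theta(\boldsymbol{x})$ while $\Phi_\theta$ is vector-valued elsewhere; I would reconcile this by running the same machinery on the scalar Lyapunov energy $E(\boldsymbol{x}):=\tfrac12\Vert\boldsymbol{x}-\Phi_\theta(\boldsymbol{x})\Vert^2$ (the squared norm of $\mathcal{H}_\theta$), whose global minimum value $0$ is attained precisely at the fixed point.
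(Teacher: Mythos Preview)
Your proposal is essentially the same as the paper's proof. The paper works from the outset with the scalar residual $f(\boldsymbol{x})=\Vert\Phi_\theta(\boldsymbol{x})-\boldsymbol{x}\Vert^2$ (your $E$, up to a constant), applies the $L$-smooth descent lemma together with the PL inequality to obtain the contraction $f(\boldsymbol{x}_{i+1})-f^*\le(1-\mu/L)(f(\boldsymbol{x}_i)-f^*)$, and concludes $f(\boldsymbol{x}_K)\to 0$; your ``reconciliation'' paragraph is exactly this route. You are actually more careful than the paper on one point: the paper passes directly from $f(\boldsymbol{x}_K)\to 0$ to $\lim_{K\to\infty}\boldsymbol{x}_K=\boldsymbol{x}^*$ without justification, whereas you correctly flag that this bridge requires an additional argument (quadratic growth plus uniqueness, or a Cauchy-sequence bound).
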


\begin{proof}
    Let $f(\boldsymbol{x}) = \Vert \Phi_\theta(\boldsymbol{x}) - \boldsymbol{x} \Vert^2$, then $f(\boldsymbol{x}): \mathbb{R}^n \to \mathbb{R}$.
    If $\boldsymbol{x}^*$ is the fixed point, then we have:
    \begin{align}
        f^* = f(\boldsymbol{x}^*) = \underset{\boldsymbol{x}}{\inf} f = 0.
    \end{align}
    
    By Lipschitz smooth (L-smooth), we have:
    \begin{align}
        \Vert\nabla f(\boldsymbol{x}) - \nabla f(\boldsymbol{y})\Vert \leq L\Vert \boldsymbol{x} -\boldsymbol{y} \Vert, \quad \forall \boldsymbol{x}, \boldsymbol{y}.
    \end{align}
    
    Then consider the iteration with the step size $\eta \in (0, \frac{1}{L}]$:
    \begin{align}
        \boldsymbol{x}_{i+1} = \boldsymbol{x}_i - \eta \nabla \Phi_\theta(\boldsymbol{x}_i). 
    \end{align}
    By the L-smooth, we have:
    \begin{align}
    \label{eq: lemma 1 itera}
        f(\boldsymbol{x}_{i+1}) &\leq f(\boldsymbol{x}_{i}) - \eta \Vert \nabla f(\boldsymbol{x}_i) \Vert^2 + \frac{L\eta^2}{2}\Vert \nabla f(\boldsymbol{x}_i) \Vert^2  \nonumber\\
        &= f(\boldsymbol{x}_{i}) -\eta\left(1-\frac{L\eta}{2}\right) \Vert \nabla f(\boldsymbol{x}_i) \Vert^2.
    \end{align}
    
    As the function $f$ satisfy the Polyak-Łojasiewicz Inequility (PL) \citep{xiao2023alternating} conditions, there exists $\mu>0$:  
    \begin{align}
    \label{eq: pl}
        \frac{1}{2} \Vert \nabla f(\boldsymbol{x}) \Vert^2 \geq \mu\left( f(\boldsymbol{x}) - f^* \right), 
    \end{align}
    where $f^* = \underset{\boldsymbol{x}}{\inf} \ f(\boldsymbol{x^*}) = 0$. 

    Then substitute Equation \ref{eq: lemma 1 itera} with Equation \ref{eq: pl}:
    \begin{align}
    \label{eq: lemma 1 2 itera}
        f(\boldsymbol{x}_{i+1}) \leq f(\boldsymbol{x}_{i}) - 2\mu\eta\left( 1-\frac{L\eta}{2} \right) 
        (f(\boldsymbol{x}_{i}) - f^*).
    \end{align}

    Let $\alpha = 2\mu\eta(1-\frac{L\eta}{2})$, when $\eta \in (0,min(\frac{1}{L},\frac{1}{2\mu}))$, we have $0<\alpha<1$. Therefore:
    \begin{align}
    \label{eq: }
        f(\boldsymbol{x}_{i+1}) - f^* \leq (1- \alpha) (f(\boldsymbol{x_i}) - f^*).
    \end{align}
    Let $\eta = \frac{1}{L}$, we have:
    \begin{align}
    \label{eq: }
        f(\boldsymbol{x}_{i+1}) - f^* \leq (1- \frac{\mu}{L}) (f(\boldsymbol{x_i}) - f^*).
    \end{align}
    As $0\leq\frac{\mu}{L}\leq1$, we have:
    \begin{align}
         f(\boldsymbol{x}_{i+1}) - f^* \to 0.
    \end{align}
    As $f^* = 0$, we have $f(\boldsymbol{x}_{K})\to 0$.
    Therefore, recursively apply function $\Phi_\theta(\boldsymbol{x_i})$ for $K$ iterations if $K \to \infty$, we have:
    \begin{align}
        \lim_{K\to \infty} \Phi_\theta(\boldsymbol{x}_{K}) =  \Phi_\theta^* = \boldsymbol{x}^*.
    \end{align}
\end{proof}

\subsection{A.4 Proof of Theorem 3}
\begin{theorem}[\textbf{Gradient Equivalence of Explicit Unrolling and Implicit Differentiation Methods}]
\label{thm: equiv}

Assume that after infinite $K$ iterations, $\boldsymbol{x}_K$ converges to a fixed point $\boldsymbol{x}^*$:
\begin{align}
    &\lim_{K\to \infty} \boldsymbol{x}_K =  \boldsymbol{x}^*, \\
     &\boldsymbol{x}^* = \mathcal{G}\bigl(\mathcal{F}_\theta(\boldsymbol{x}^*)\bigr) = \Phi_\theta(\boldsymbol{x}^*). 
\end{align}

Then for any smooth loss $\mathcal{L}$, if the spectral radius holds $\rho(J_{\Phi_\theta}|\boldsymbol{x}^*) <1$, the loss gradients sasitfy:
\begin{align}
\small
  \frac{\partial \mathcal{L}}{\partial \theta}
  &= \lim_{K \to \infty}\frac{\partial \mathcal{L}}{\partial \boldsymbol{x}_K}  \Sigma_{i=1}^{K} \bigl((\prod_{j=i+1}^{K} J_{\Phi_\theta}|\boldsymbol{x}_{j-1}) 
        \frac{\partial \Phi_\theta(\boldsymbol{x}_{i-1})}{\partial\theta}\bigr)\\
  &=\frac{\partial \mathcal{L}}{\partial \boldsymbol{x}^*}(I-J_{\Phi_\theta}|_{\boldsymbol{x}^*})^{-1}\frac{\partial \Phi_\theta(\boldsymbol{x}^*)}{\partial \theta}.
\end{align}

\end{theorem}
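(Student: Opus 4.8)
The plan is to start from the closed-form explicit-unrolling gradient established in Theorem~1, namely $\frac{\partial \boldsymbol{x}_K}{\partial \theta} = \sum_{i=1}^{K}\bigl(\prod_{j=i+1}^{K} J_{\Phi_\theta}|_{\boldsymbol{x}_{j-1}}\bigr)\frac{\partial \Phi_\theta(\boldsymbol{x}_{i-1})}{\partial \theta}$ from Equation~\eqref{eq: dxk/dtheta}, and to show that its $K\to\infty$ limit collapses, via a Neumann series, to the implicit-differentiation expression $(I - J_{\Phi_\theta}|_{\boldsymbol{x}^*})^{-1}\frac{\partial \Phi_\theta(\boldsymbol{x}^*)}{\partial \theta}$ of Equation~\eqref{eq: dx/dtheta_expanded} in Theorem~2. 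Throughout I assume $J_{\Phi_\theta}$ and $\frac{\partial}{\partial\theta}\Phi_\theta$ are continuous in $\boldsymbol{x}$ on a neighborhood of $\boldsymbol{x}^*$, and I invoke Lemma~1 to obtain $\boldsymbol{x}_j \to \boldsymbol{x}^*$ as $j\to\infty$, which keeps every drifting evaluation point under control. Once the limit of $\frac{\partial \boldsymbol{x}_K}{\partial \theta}$ is identified, premultiplying by $\lim_{K}\frac{\partial \mathcal{L}}{\partial \boldsymbol{x}_K} = \frac{\partial \mathcal{L}}{\partial \boldsymbol{x}^*}$ (continuity of the loss gradient) yields the claimed identity.

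First I would reindex the sum by the ``distance from the end'' $\ell = K-i$, rewriting the $K$-term sum as $\sum_{\ell=0}^{K-1}\bigl(\prod_{j=K-\ell+1}^{K} J_{\Phi_\theta}|_{\boldsymbol{x}_{j-1}}\bigr)\frac{\partial \Phi_\theta(\boldsymbol{x}_{K-\ell-1})}{\partial \theta}$. The $\ell$-th summand is a product of exactly $\ell$ Jacobians evaluated at the last $\ell$ iterates $\boldsymbol{x}_{K-\ell},\dots,\boldsymbol{x}_{K-1}$, postmultiplied by $\frac{\partial}{\partial\theta}\Phi_\theta$ at $\boldsymbol{x}_{K-\ell-1}$. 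For each \emph{fixed} $\ell$, all these points converge to $\boldsymbol{x}^*$ as $K\to\infty$, so by continuity the $\ell$-th summand converges to $\bigl(J_{\Phi_\theta}|_{\boldsymbol{x}^*}\bigr)^{\ell}\frac{\partial \Phi_\theta(\boldsymbol{x}^*)}{\partial \theta}$; note that fixing $i$ instead would send every term to $0$, so the bookkeeping via $\ell$ is what makes the limit nontrivial.

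Next I would justify exchanging the limit with the infinite sum. Since $\rho(J_{\Phi_\theta}|_{\boldsymbol{x}^*})<1$, there is a submultiplicative matrix norm in which $\|J_{\Phi_\theta}|_{\boldsymbol{x}^*}\|\le q<1$; by continuity there is a neighborhood $U\ni\boldsymbol{x}^*$ and $q'\in(q,1)$ with $\|J_{\Phi_\theta}|_{\boldsymbol{x}}\|\le q'$ on $U$, and by Lemma~1 there is $N$ with $\boldsymbol{x}_j\in U$ for all $j\ge N$. For $K\ge N$ the $\ell$-th summand then has norm at most $M\,(q')^{\max(\ell-N,0)}$, where $M$ absorbs the finitely many ``early'' factors together with a bound on $\frac{\partial}{\partial\theta}\Phi_\theta$; this bound is summable and independent of $K$, so a dominated-convergence argument for series gives $\lim_{K\to\infty}\frac{\partial \boldsymbol{x}_K}{\partial \theta} = \sum_{\ell=0}^{\infty}\bigl(J_{\Phi_\theta}|_{\boldsymbol{x}^*}\bigr)^{\ell}\frac{\partial \Phi_\theta(\boldsymbol{x}^*)}{\partial \theta}$. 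Because $\rho(J_{\Phi_\theta}|_{\boldsymbol{x}^*})<1$, the matrix $I-J_{\Phi_\theta}|_{\boldsymbol{x}^*}$ is invertible and the Neumann series identity $\sum_{\ell\ge0}\bigl(J_{\Phi_\theta}|_{\boldsymbol{x}^*}\bigr)^{\ell} = (I-J_{\Phi_\theta}|_{\boldsymbol{x}^*})^{-1}$ applies, so the limit equals $(I-J_{\Phi_\theta}|_{\boldsymbol{x}^*})^{-1}\frac{\partial \Phi_\theta(\boldsymbol{x}^*)}{\partial \theta}$, which is exactly $\frac{\partial \boldsymbol{x}^*}{\partial\theta}$; premultiplying by $\frac{\partial \mathcal{L}}{\partial \boldsymbol{x}^*}$ then closes the chain. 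As a consistency check one may instead pass to the limit directly in the one-step recursion $\frac{\partial \boldsymbol{x}_i}{\partial\theta} = J_{\Phi_\theta}|_{\boldsymbol{x}_{i-1}}\frac{\partial \boldsymbol{x}_{i-1}}{\partial\theta} + \frac{\partial \Phi_\theta(\boldsymbol{x}_{i-1})}{\partial\theta}$ of Equation~\eqref{eq: dxi}: any limit $\boldsymbol{z}$ must satisfy $\boldsymbol{z} = J_{\Phi_\theta}|_{\boldsymbol{x}^*}\boldsymbol{z} + \frac{\partial \Phi_\theta(\boldsymbol{x}^*)}{\partial\theta}$, which invertibility of $I-J_{\Phi_\theta}|_{\boldsymbol{x}^*}$ pins down uniquely, though this shortcut still needs the previous paragraph to guarantee that $\boldsymbol{z}$ exists.

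The main obstacle is the limit--sum interchange: the sum has a \emph{length} that grows with $K$ and summands that depend on $K$ through their evaluation points, so a naive termwise limit is useless. The fix is the reindexing $\ell = K-i$ combined with the uniform geometric bound $(q')^{\ell}$ supplied by $\rho(J_{\Phi_\theta}|_{\boldsymbol{x}^*})<1$ and continuity of the Jacobian, which in turn relies on Lemma~1 placing the tail iterates inside the contracting neighborhood $U$. Everything else — the continuity arguments, the Neumann series identity, and the matching with Theorem~2 — is routine.
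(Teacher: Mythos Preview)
Your proposal is correct and follows essentially the same route as the paper: both start from the explicit-unrolling sum of Theorem~1, replace the drifting Jacobians by $J_{\Phi_\theta}|_{\boldsymbol{x}^*}$ in the limit, and collapse the resulting geometric series via the Neumann identity $\sum_{\ell\ge 0}J^\ell=(I-J)^{-1}$. The only difference is rigor: the paper's proof simply writes $\prod_{j=i+1}^{K}J_{\Phi_\theta}|_{\boldsymbol{x}_{j-1}}\approx J^{K-i}$ and passes to the limit without justification, whereas your reindexing $\ell=K-i$ together with the dominated-convergence bound supplies the limit--sum interchange that the paper leaves implicit.
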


\begin{proof}
    In the explicit unrolling method, if $K \to \infty$, we have:
    \begin{align}
    \lim_{K\to \infty} &\boldsymbol{x}_K \to \boldsymbol{x}^*,\\
    \lim_{K\to \infty}&J_{\Phi_\theta}|_{\boldsymbol{x}_K} \to \boldsymbol{J}_{\Phi_\theta}|_{\boldsymbol{x}^*} = J, \\
    \lim_{K\to \infty}&\frac{\partial \Phi_\theta(\boldsymbol{x}_K)}{\partial \theta}   \to \frac{\partial \Phi(\boldsymbol{x}^*)}{\partial \theta}.
    \end{align}
    
    Assume the spectral radius $\rho(J_{\Phi_\theta}|\boldsymbol{x}^*) = \rho(J) <1$ (ensuring stability of the fixed point), the product of Jacobians converges to a geometric series:
    \begin{align}
    \label{eq: geometry series}
        \prod_{j=i+1}^{K}J_{\Phi_\theta}|_{\boldsymbol{x}_{j-1}} \approx J^{K-i}.
    \end{align}

    Hence, in the unrolling step $K$, we have:
    \begin{align}
    \label{eq: dxk}
        \frac{\partial \boldsymbol{x}_K}{d\theta} 
        &= \Sigma_{i=1}^K \bigl( (\prod_{j=i+1}^{K} J_{\Phi_\theta}|_{\boldsymbol{x}_{j-1}})  \frac{\partial \Phi_\theta(\boldsymbol{x}_{i-1})}{\partial \theta} \bigr) \\
        &= \Sigma_{i=1}^K J^{K-i} \frac{\partial \Phi_\theta(\boldsymbol{x}^*)}{\partial \theta}.
    \end{align}    

    For $K \to \infty$, the Neumann series \citep{dimitrov2017computation} holds below:
    \begin{align}
    \label{eq: Neumann}
        \sum_{i=0}^\infty J^i = (I - J)^{-1}
    \end{align}
    
    Therefore, the loss gradient in the unrolling method converges to:
    \begin{align}
        \frac{\partial L}{\partial \theta}
        &=\lim_{K \to \infty}\frac{\partial \mathcal{L}}{\partial \boldsymbol{x}_K}  \Sigma_{i=1}^{K} \bigl((\prod_{j=i+1}^{K} J_{\Phi_\theta}|\boldsymbol{x}_{j-1}) 
        \frac{\partial \Phi_\theta(\boldsymbol{x}_{i-1})}{\partial\theta}\bigr)\\
        &= \frac{\partial L}{\partial \boldsymbol{x}^*}(I-J)^{-1} \frac{\partial \Phi_\theta(\boldsymbol{x}^*)}{\partial \theta} \\
        &= \frac{\partial L}{\partial \boldsymbol{x}^*}(I-J_\Phi|_{\boldsymbol{x}^*} )^{-1}\frac{\partial \Phi_\theta(\boldsymbol{x}^*)}{\partial \theta} .
    \end{align}

 This establishes the claimed gradient equivalence of explicit unrolling and implicit differentiation methods.
\end{proof}

\subsection{A.5 Derivation of the Jacobian Matrix}

For the Jacobian matrix $J_{\Phi_\theta}|_{\boldsymbol{x}_i}$ defined in Equation \ref{eq: J}, as we have $\Phi_\theta$ to be a composite function $\Phi_\theta = \mathcal{G}\circ \mathcal{F}_\theta$, of which $
    \boldsymbol{c} = \mathcal{F}_{\theta}(\boldsymbol{x}, \boldsymbol{v}), \boldsymbol{x} = \mathcal{G}(\boldsymbol{c})
$, we expand $J_{\Phi_\theta}|_{\boldsymbol{x}_i}$ as follows:
    \begin{align}
    \label{eq: expanded J}
    \small
        J_{\Phi_\theta}|_{\boldsymbol{x}_{i}} &=  
        \frac{\partial \Phi_\theta(\boldsymbol{x}_i)}{\partial \boldsymbol{x}_i} 
        =
        \frac{\partial \mathcal{G}(\boldsymbol{c}_{i+1})}{\partial \boldsymbol{c}_{i+1}}
        \frac{\partial \mathcal{F}_\theta(\boldsymbol{x}_{i})}{\partial \boldsymbol{x}_{i}},\\
        &=  J_{\mathcal{G}}|_{\boldsymbol{c}_{i+1}} \cdot J_{\mathcal{F}_\theta}|_{\boldsymbol{x}_{i}}, \   \forall i \in \{1,...,K-1\}.
    \end{align}

The $\mathcal{F}_\theta$ is the function of the predictive model (neural network), and hence $J_{\mathcal{F}_\theta}|_{\boldsymbol{x}_{i}}$ is very easy to obtain. The $\mathcal{G}$ is the optimization model and $J_{\mathcal{G}}|_{\boldsymbol{c}_{i}}$ is obtained through conducting a sensitivity analysis on Karush–Kuhn–Tucker (KKT) conditions of the optimization model $\mathcal{G}$. 

Below we show a general procedure of obtaining $\frac{\partial \mathcal{G}(\boldsymbol{c}_{i})}{\partial \boldsymbol{c}_{i}}$.
    
\textbf{STEP 1:} A convex optimization model $\mathcal{G}$ with differentiable convex objective $\boldsymbol{g}$ and convex constraints $\boldsymbol{h}$, $\boldsymbol{q}$, taking the parameter vector $\boldsymbol{c}$ only in the objective $g$, is defined as:
\begin{mini!}
{\boldsymbol{x}}
    {g(\boldsymbol{x};\boldsymbol{c})} 
    {\label{eq: general opt}}{\label{obj: general obj}}
    \addConstraint{\boldsymbol{h}(\boldsymbol{x})}{= \boldsymbol{0}} {\label{cst: eq}}
    \addConstraint{\boldsymbol{q}(\boldsymbol{x}) }{\leq \boldsymbol{0}} {\label{cst: ineq}}
\end{mini!}
where $\boldsymbol{x} \in \mathbb{R}^n$ is the decision variable, parameter $\boldsymbol{c} \in \mathbb{R}^n$, the objective function $g: \mathbb{R}^n\to \mathbb{R}$ is convex, $\boldsymbol{h}: \mathbb{R}^n \to \mathbb{R}^m$, constraints $\boldsymbol{q}: \mathbb{R}^n \to \mathbb{R}^p$ are affine. We consider the case when the solution mapping is single-valued, and parameters only appear in the objective function.

\textbf{STEP 2:} Give the Lagrangian multiplier function $\mathcal{K} $with dual variables $\boldsymbol{\lambda}, \boldsymbol{\mu}$ for constraints \ref{cst: eq} to \ref{cst: ineq}:

\begin{align}
    \mathcal{K}(\boldsymbol{x}, \boldsymbol{\lambda}, \boldsymbol{\mu};\boldsymbol{c}) = &g(\boldsymbol{x};\boldsymbol{c}) + \boldsymbol{\lambda}^T \boldsymbol{h}(\boldsymbol{x}) + \boldsymbol{\mu}^T\boldsymbol{q}(\boldsymbol{x}),
\end{align}
where $\boldsymbol{\lambda} \in \mathbb{R}^m, \boldsymbol{\mu} \in \mathbb{R}^p$.

\textbf{STEP 3: } Then, the KKT conditions are presented as follows:
\begin{align}
\small
    & \nabla_{\boldsymbol{x}} g(\boldsymbol{x};\boldsymbol{c}) + \sum_{i=1}^{m} \boldsymbol{\lambda}_{i} \nabla_{\boldsymbol{x}} \boldsymbol{h}_i(\boldsymbol{x}) +\sum_{j=1}^{p}\boldsymbol{\mu}_{j} \nabla_{\boldsymbol{x}} \boldsymbol{q}_j(\boldsymbol{x}) = 0, \label{eq: general_stationary} \\
    & \boldsymbol{h}_i(\boldsymbol{x}) = 0, \  i = 1,...,m \\
    & \boldsymbol{\mu}_j\boldsymbol{q}_j(\boldsymbol{x}) = 0, \ j = 1,...,p \\
    & \boldsymbol{\lambda} \geq 0 \\
    & \boldsymbol{\mu} \geq 0 \label{eq: general_kkt final}
\end{align}

\textbf{STEP 4:} Construct the KKT condition from Equation \ref{eq: general_stationary} to \ref{eq: general_kkt final} to an implicit function $\boldsymbol{\psi}(\boldsymbol{x}, \boldsymbol{\lambda}, \boldsymbol{\mu};\boldsymbol{c}) = 0$:
\begin{align}
\label{eq: Phi function}
\small
    &\boldsymbol{\psi}(\boldsymbol{x}, \boldsymbol{\lambda}, \boldsymbol{\mu};\boldsymbol{c}) = \nonumber \\
    \small
    &\left[ 
        \begin{array}{c}
        \small
             \nabla_{\boldsymbol{x}}g(\boldsymbol{x};\boldsymbol{c}) + \sum_{i=1}^{m} \boldsymbol{\lambda}_{i}\nabla_{\boldsymbol{x}}\boldsymbol{h}_i(\boldsymbol{x}) +\sum_{j=1}^{p}\boldsymbol{\mu}_{j}\nabla_{\boldsymbol{x}}\boldsymbol{q}_j(\boldsymbol{x}) = 0 \\
             \boldsymbol{h}(\boldsymbol{x}) = 0 \\
             \boldsymbol{q}(\boldsymbol{x}) \leq 0 \\
            \boldsymbol{\lambda} \geq 0  \\
            \boldsymbol{\mu} \geq 0  \\ 
             \boldsymbol{\mu}_j\boldsymbol{q}_j(\boldsymbol{x}) = 0 , \ j = 1,...,p
        \end{array} 
    \right]
\end{align}

For simplicity, we define:
\begin{align}
\small
    \phi(\boldsymbol{x};\boldsymbol{c}) =              \nabla_{\boldsymbol{x}}g(\boldsymbol{x};\boldsymbol{c}) + \sum_{i=1}^{m} \boldsymbol{\lambda}_{i}\nabla_{\boldsymbol{x}}\boldsymbol{h}_i(\boldsymbol{x}) +\sum_{j=1}^{p}\boldsymbol{\mu}_{j}\nabla_{\boldsymbol{x}}\boldsymbol{q}_j(\boldsymbol{x})
\end{align}

\textbf{STEP 5:} Differentiate the stationarity and the complementary slackness function in Equation \ref{eq: Phi function} w.r.t. $\boldsymbol{c}$:
\begin{align}
    \boldsymbol{M}
    \left[ 
        \begin{array}{c}
            d \boldsymbol{x} \\
            d \boldsymbol{\lambda} \\
            d \boldsymbol{\mu}
        \end{array}
    \right] = 
    \left[ 
        \begin{array}{c}
            -\nabla_{\boldsymbol{x},\boldsymbol{c}} g(\boldsymbol{x};\boldsymbol{c})  \\
            \boldsymbol{0} \\
            \boldsymbol{0}
        \end{array}
    \right] d \boldsymbol{c},     
\end{align}

where 
\begin{align}
\label{eq: general M}
\boldsymbol{M} = 
  \left[ 
  \small
  \setlength{\arraycolsep}{1.5pt}
        \begin{array}{ccc}
        \small
        \nabla_{\boldsymbol{x}} \phi(\boldsymbol{x};\boldsymbol{c}) &  \sum_{i=1}^{m}\nabla_{\boldsymbol{x}}\boldsymbol{h}_i(\boldsymbol{x}) & \sum_{j=1}^{p}\nabla_{\boldsymbol{x}}\boldsymbol{q}_j(\boldsymbol{x}) \\
        D( \boldsymbol{h}(\boldsymbol{x})) & \boldsymbol{0} & \boldsymbol{0}  \\
        D(\boldsymbol{\mu}) & \boldsymbol{0} & D(\boldsymbol{q}(\boldsymbol{x}))
        \end{array}    
    \right],
\end{align}
and $D(\cdot)$ denotes the diagonized matrix of $(\cdot)$.

\textbf{STEP 6: } Then we will obtain $\frac{\partial \boldsymbol{x}}{\partial \boldsymbol{c}}$ by computing the right side of Equation \ref{eq: general M}:

\begin{align}
\label{eq: general_MI}
    \renewcommand{\arraystretch}{1.5} 
    \left[ 
        \begin{array}{c}
            \frac{\partial \boldsymbol{x}}{\partial \boldsymbol{c}}  \\
            \frac{\partial \boldsymbol{\lambda}}{\partial \boldsymbol{c}} \\
            \frac{\partial \boldsymbol{\mu}}{\partial \boldsymbol{c}} 
        \end{array}
    \right] = -\boldsymbol{M}^{-1}
    \left[ 
    \begin{array}{c}
            \nabla_{\boldsymbol{x}\boldsymbol{c}} \ g(\boldsymbol{x};\boldsymbol{c}) \\
            \boldsymbol{0} \\
            \boldsymbol{0}
    \end{array}    
\right].
\end{align}

Then we will easily obtain $\frac{\partial \boldsymbol{x}_K}{\partial\boldsymbol{c}_K}$ via obtaining the first $n$ rows by Equation \ref{eq: general_MI}. 

\section{Appendix B. Pseudocodes}
This section presents the pseudocodes of explicit unrolling and implicit differentiation methods of the R-DFL framework.

\subsection{B.1 Pseudocode of Explicit Unrolling Methods}
In the explicit unrolling methods, we keep the gradient open all the time in the forward and backward pass, and the gradients will backpropagate through all unrolling layers to the parameter $\theta$.

\begin{algorithm}[ht]
\caption{R-DFL with Explicit Unrolling Method.}
\label{alg: explicit}
\textbf{Input}: Features $\boldsymbol{v}$, unrolling layer $K$, optimization model $\mathcal{G}$\\
\textbf{Parameter}: max epoch $N_\mu$, parameters $\theta$, step size $\alpha$ \\
\textbf{Output}: Optimal results $\boldsymbol{x}_K, \boldsymbol{c}_K$  
\begin{algorithmic}[1]
    \For{$\mu = 1$ \text{to} $N_\mu$} 
        \State Initialize $\boldsymbol{x}_0$ \Comment{\textit{Warm start initialization}}
        \For{$k = 1$ \text{to} $K$} \Comment{\textit{Forward: Unrolling}}
            \State $ \boldsymbol{c}_k \gets \mathcal{F}(\boldsymbol{v, \boldsymbol{x}}_{k-1}; \theta_k)$ \Comment{\textit{Open gradient}}
            \State $ \boldsymbol{x}_k \gets \mathcal{G}(\boldsymbol{c}_k)$
            \State \text{Update} $\boldsymbol{x}_{k-1} \gets \boldsymbol{x}_k$
        \EndFor
        \State $\theta_{k+1} \gets \theta_k - \alpha \frac{\partial \mathcal{L}}{\partial \theta_k}$ \text{by Eq. (\ref{eq: dl/dtheta})} \Comment{\textit{Backward by AD}}
    \EndFor
    \State \textbf{return} Optimal decision $\boldsymbol{x}_K, \boldsymbol{c}_K$
\end{algorithmic}
\end{algorithm}

\subsection{B.2 Pseudocode of Implicit Differentiation Methods}

In the implicit differentiation methods, the forward pass involves \textbf{closing} the gradient within the \texttt{RootFind} procedures until convergence or at the maximum number of iterating layers $K$. Subsequently, we manually open the gradient and construct the computational graph of prediction and optimization at the equilibrium point $\boldsymbol{x}^*$ and $\boldsymbol{c}^*$. During the backward pass, gradients propagate exclusively through the equilibrium point $\boldsymbol{x}^*$ and $\boldsymbol{c}^*$ and back to parameter $\theta$ through the differentiaiton function in Equation \ref{eq: dldtheta_x*}. The pseudocode of the implicit differentiation methods is presented below.

\begin{algorithm}[ht]
\caption{R-DFL with Implicit Differentiation Method.}
\label{alg: explicit}
    \textbf{Input}: Features $\boldsymbol{v}$, unrolling time $K$, optimization model $\mathcal{G}$\\
    \textbf{Parameter}: max epoch $N_\mu$, parameters $\theta$, step size $\alpha$\\
    \textbf{Output}: Optimal results $\boldsymbol{x}_K$ and $\boldsymbol{c}_K$   
    \begin{algorithmic}[1] 
    \For{$\mu \leq N_\mu$} 
        \State Initialize $\boldsymbol{x}_0$ 
        \For{$k \leq K$} \Comment{\textit{Fixed point RootFind}}
            \State $ \boldsymbol{c}_k \gets \mathcal{F}(\boldsymbol{v, \boldsymbol{x}}_{k-1};\theta)$ \Comment{\textit{Close gradient}}
            \State  $\boldsymbol{x}_k \gets \mathcal{G}(\boldsymbol{c}_k)$
            \State \text{Update} $\boldsymbol{x}_{k-1} \gets \boldsymbol{x}_k$
        \EndFor
        \State $ \boldsymbol{c}^* \gets F_\theta(\boldsymbol{v, \boldsymbol{x}}_{K})$ \Comment{\textit{Forward: Open gradient}}
        \State $\boldsymbol{x}^* \gets \mathcal{G}(\boldsymbol{c}^*)$
        \State Compute $\frac{\partial \mathcal{L}}{\partial \theta}$ \text{by Eq. (\ref{eq: dldtheta_x*})} \Comment{\textit{Backward manually}}
        \State $\theta^+ \gets \theta -\alpha\frac{\partial \mathcal{L}}{\partial \theta}$   
    \EndFor
    \State \textbf{return} Optimal decision $\boldsymbol{x}^*, \boldsymbol{c}^*$
\end{algorithmic}
\end{algorithm}

\section{Appendix C. Benchmark Problems}
\subsection{C.1 Newsvendor Problem}
\textbf{Problem Formulation}

In the modified recursive Multi-Product Newsvendor Problem (R-MPNP), we examine a supply chain system comprising two strategic stakeholders: a retailer and a supplier. The retailer must determine optimal daily order quantities for n distinct products, while the supplier sets the order prices $\boldsymbol{c}$ based on the retailer’s order quantities $\boldsymbol{x}$.

Specifically, we first model the relationship between order price $\boldsymbol{c}$ and order quantities $\boldsymbol{x}$ as follows: 
\begin{align}
\label{eq: nv_pred}
    \boldsymbol{c} = \mathcal{F}_\theta(\boldsymbol{\boldsymbol{x}, \boldsymbol{\alpha, \beta}}),
\end{align}
where $\boldsymbol{\alpha}, \boldsymbol{\beta}\in \mathbb{R}^n$ are parameters and $\boldsymbol{\alpha}, \boldsymbol{\beta} \geq 0$.

To determine the optimal order quantity $\boldsymbol{x}^*$ that minimizes the total order cost while satisfying operational constraints, we formulate the optimization problem $\mathcal{G}$:
\begin{mini!}
{\boldsymbol{x}}
    {\boldsymbol{c}^T \boldsymbol{x} } 
    {\label{eq: newsvendor}}{\label{obj: new_obj}}
    \addConstraint{\boldsymbol{1}^T \boldsymbol{x}}{\geq T_1} {\label{cst: news_min}}
    \addConstraint{\boldsymbol{1}^T \boldsymbol{x}}{\leq T_2} {\label{cst: news_max}}
    \addConstraint{\boldsymbol{x}}{\geq \boldsymbol{s}_1}{\label{cst: news_Nmin}}
    \addConstraint{\boldsymbol{x}}{\leq \boldsymbol{s}_2,}{\label{cst: news_Nmax}}
\end{mini!}
where $T_1, T_2 \in \mathbb{R}, \boldsymbol{s}_1, \boldsymbol{s}_2 \in \mathbb{R}^n$. The objective function is the total order cost of all products. Constraints \ref{cst: news_min} and \ref{cst: news_max} are the lower and upper bounds of the total order quantity; constraints \ref{cst: news_Nmin} and \ref{cst: news_Nmax} show the lower and upper order quantity of each product.

\textbf{Derivation of the Jacobian Matrix of R-MPNP}

To employ the implicit differentiation method, the critical step is obtaining the Jacobian Matrix $J_{\Phi_\theta}|_{\boldsymbol{x}^*}$. We follow the procedures in Appendix A.4 to compute.

To compute the required Jacobian at $\boldsymbol{x}^*$ of function $\Phi_\theta = \mathcal{G} \circ \mathcal{F}_\theta$, we need to compute:
\begin{align}
    J_{\Phi_\theta}|_{\boldsymbol{x}^*} &= J_{\mathcal{G}}|_{\boldsymbol{c}^*} \cdot J_{\mathcal{F}_\theta}|_{\boldsymbol{x}^*} \\
    &= 
    \frac{\partial \boldsymbol{x}^*}{\partial \boldsymbol{c}^*}
    \frac{\partial \boldsymbol{c}^*}{\partial \boldsymbol{x}^*},
\end{align}
where $\frac{\partial \boldsymbol{x}^*}{\partial \boldsymbol{c}^*}$ is obtained by conducting a sensitivity analysis on KKT conditions of the optimization model in Problem \ref{eq: newsvendor}, and $\frac{\partial \boldsymbol{c}^*}{\partial \boldsymbol{x}^*}$ is obtained easily by differentiating the predictive model in Equation \ref{eq: nv_pred}. 

Below we proceed the derivation to obtain $\frac{\partial \boldsymbol{x}^*}{\partial \boldsymbol{c}^*}$.
We first give the Lagrangian multiplier function $\mathcal{K}$ with dual variables $\lambda, \mu, \boldsymbol{\nu}, \boldsymbol{\eta}$ for constraints \ref{cst: news_min} to \ref{cst: news_Nmax}:
\begin{align}
\small
    \mathcal{K}(\boldsymbol{x}, \lambda, \boldsymbol{\nu}, \boldsymbol{\eta};\boldsymbol{c}) = &\boldsymbol{c}^T \boldsymbol{x} + \lambda(T_1 - \boldsymbol{1}^T\boldsymbol{x}) +\mu(\boldsymbol{1}^T \boldsymbol{x} -T_2) - \nonumber \\
    & \boldsymbol{\nu}^T (\boldsymbol{x} - \boldsymbol{s}_1) + \boldsymbol{\eta}^T(\boldsymbol{x}-\boldsymbol{s}_2),
\end{align}
where $\lambda, \mu \in \mathbb{R}$ and $\boldsymbol{\nu}, \boldsymbol{\eta} \in \mathbb{R}^n$. 

Then the KKT conditions are presented as follows:
\begin{align}
    &\nabla_{\boldsymbol{x}} \mathcal{K} = \boldsymbol{c}^T - \lambda\boldsymbol{1}^T +\mu\boldsymbol{1}^T- \boldsymbol{\nu}^T + \boldsymbol{\eta}^T = 0 \label{eq: news_stationarity} \\
    &\lambda \geq 0, \mu \geq 0, \boldsymbol{\nu}\geq 0, \boldsymbol{\eta} \geq0  \\
    &\lambda (T_1 - \boldsymbol{1}^T\boldsymbol{x}) = 0 \\
    &\mu(\boldsymbol{1}^T\boldsymbol{x}-T_2) = 0  \\   
    &\nu_i(\boldsymbol{s}_{1i} - x_i ) = 0,\  \forall \ i \in \{1,2,...,n\}  \\ 
    &\eta_i(x_i - \boldsymbol{s}_{2i}) = 0,\  \forall \ i \in \{1,2,...,n\}  \label{eq: news_kkt final}
\end{align}

Construct the KKT condition from Equation \ref{eq: news_stationarity} to \ref{eq: news_kkt final} to an implicit function $\boldsymbol{\psi}(\boldsymbol{x}, \lambda, \mu,\boldsymbol{\nu}, \boldsymbol{\eta}; \boldsymbol{c}) = 0$:
\begin{align}
\small
\label{eq: F function}
    &\boldsymbol{\psi}(\boldsymbol{x}, \lambda, \mu,\boldsymbol{\nu}, \boldsymbol{\eta}; \boldsymbol{c}) = \nonumber \\
    \small
    &\left[ 
        \begin{array}{c|c}
             \text{stationarity} & \boldsymbol{c}^T - \lambda\boldsymbol{1}^T +\mu\boldsymbol{1}^T- \boldsymbol{\nu}^T + \boldsymbol{\eta}^T = 0, \\
             \text{(primal} &(T_1 - \boldsymbol{1}^T\boldsymbol{x}) \leq 0, (\boldsymbol{1}^T\boldsymbol{x}-T_2) \leq 0, \\
            \text{feasibility)} &\boldsymbol{x} \geq \boldsymbol{s}_1, \boldsymbol{x}\leq \boldsymbol{s}_2, \\ 
            \text{dual feasibility} & \lambda, \mu, \boldsymbol{\nu}, \boldsymbol{\eta} \geq 0, \\ 
             \text{(complementary} &\lambda(T_1 - \boldsymbol{1}^T\boldsymbol{x}) = 0, \mu(\boldsymbol{1}^T\boldsymbol{x}-T_2) =0 \\
             \text{slackness)} &\nu_i(N_{1i} - x_i) = 0, \eta_i(x_i  - N_{2i}) = 0
        \end{array} 
    \right]
\end{align}

Differentiate the stationarity and the complementary slackness function w.r.t. $\boldsymbol{c}$:
\begin{align}
    \boldsymbol{M}
    \left[ 
        \begin{array}{c}
            d \boldsymbol{x} \\
            d\lambda \\
            d\mu\\
            d \boldsymbol{\nu}  \\
            d \boldsymbol{\eta}
        \end{array}
    \right] = 
    \left[ 
        \begin{array}{c}
            -\boldsymbol{I}  \\
            \boldsymbol{0} \\
            \boldsymbol{0} \\
            \boldsymbol{0} \\
            \boldsymbol{0} \\
        \end{array}
    \right] d\boldsymbol{c},     
\end{align}

\begin{align}
\label{M}
\boldsymbol{M} = 
  \left[ 
  \small
  \setlength{\arraycolsep}{1.5pt}
        \begin{array}{ccccc}
        \small
        \boldsymbol{0} & - \boldsymbol{1} & \boldsymbol{1} & -\boldsymbol{I} & \boldsymbol{I} \\
        -\lambda\boldsymbol{1}^T & T_1 - \boldsymbol{1}^T\boldsymbol{x} & 0 & \boldsymbol{0} & \boldsymbol{0}\\
        \mu \boldsymbol{1}^T & 0 & \boldsymbol{1}^T\boldsymbol{x} - T_2& \boldsymbol{0} & \boldsymbol{0} \\
        D(- \boldsymbol{\nu}) & \boldsymbol{0} & \boldsymbol{0} & D(\boldsymbol{s}_1-\boldsymbol{x}) & \boldsymbol{0} \\
        D( \boldsymbol{\eta}) & \boldsymbol{0} & \boldsymbol{0} & \boldsymbol{0} & D(\boldsymbol{x}-\boldsymbol{s}_2)
        \end{array}    
    \right],
\end{align}
where $D(\cdot)$ denotes the diagonal matrix of $(\cdot)$, and $\boldsymbol{M} \in \mathbb{R}^{(3N+2) \times (3N+2)}$.

Then we have: 
\begin{align}
\label{eq: nv_M^I}
    \left[ 
    \renewcommand{\arraystretch}{1.5} 
        \begin{array}{c}
            \frac{\partial\boldsymbol{x}}{\partial\boldsymbol{c}} \\
            \frac{\partial\boldsymbol{\lambda}}{\partial\boldsymbol{c}} \\
            \frac{\partial\boldsymbol{\mu}}{\partial\boldsymbol{c}}\\
            \frac{\partial\boldsymbol{\nu}}{\partial\boldsymbol{c}}  \\
            \frac{\partial\boldsymbol{\eta}}{\partial\boldsymbol{c}}  \\
        \end{array}
    \right] = -\boldsymbol{M}^{-1}\left[ 
        \begin{array}{c}
            \boldsymbol{I}  \\
            \boldsymbol{0} \\
            \boldsymbol{0} \\
            \boldsymbol{0} \\
            \boldsymbol{0} \\
        \end{array}
    \right].
\end{align}

Then $\frac{\partial \boldsymbol{x}^*}{\partial\boldsymbol{c}^*}$ is the first $n$ rows of Equation \ref{eq: nv_M^I}. 


\subsection{C.2 Bipartite Matching Problem}
\textbf{Problem Formulation}

In the modified recursive bipartite matching problem (R-BMP), we analyze a two-tier stakeholder system involving: (1) a centralized matching platform as the primary decision-maker, and (2) end-user participants (drivers and riders) as responding stakeholders. The platform stakeholder develops optimal matching allocations while endogenously modeling anticipated stakeholder regret. After matching, both drivers and riders may realize actual regret upon the matching outcomes. 

Mathematically, denote drivers as $I$ and riders as $J$. Let $\pi$ denote the regret function, $\boldsymbol{v} \in \mathbb{R}^{n \times d}$ denote the features, $\boldsymbol{x} \in \mathbb{R}^{n\times n}$ denote the match between drivers and riders. Then the regret function of drivers $i$ and riders $j$ is explicitly formulated as follows:
\begin{align}
    \pi_i :\ \ r_{i} &= f_{i}(x_{ij}, \boldsymbol{v}_i), \\
    \pi_j : \ r_{j} & = f_j(x_{ij}, \boldsymbol{v}_j), 
\end{align}
where $\boldsymbol{v}_i$ includes pickup time and urgency rate, $\boldsymbol{v}_j$ includes pickup time, peak and fatigue, $\boldsymbol{v} = \{\boldsymbol{v}_i, \boldsymbol{v}_j\}$. 

We then define a generalized cost $\boldsymbol{c}: c_{ij} = r_i + r_j$ to combine the regret of riders and drivers, and hence:
\begin{align}
    \boldsymbol{c}= f_\theta(\boldsymbol{x}, \boldsymbol{v}),
\end{align}
where $\boldsymbol{c} \in \mathbb{R}^{n\times n}$ refers to the generalized matching cost on the match between set ${i,j}$, and $\theta$ includes all the parameters in functions $f_i, f_j$.

Therefore, in the predictive model $\mathcal{F}_\theta$, we need to estimate the function $f$ mapping from $\boldsymbol{x} \to \boldsymbol{c}$.

In the optimization model $\mathcal{G}$, the matching problem is formulated as:

\begin{mini!}{\boldsymbol{x}}
    {\sum_{i \in I} \sum_{j \in J}   c_{ij} x_{ij} + \epsilon ||\boldsymbol{x}||_2^2} 
    {\label{eq: rbm qp}}{\label{obj: rbm obj}}
    \addConstraint{\sum_{i\in I}{x_{ij}}}{\leq 1}, \forall \ j \in J {\label{cst: sum_i = 1}}
    \addConstraint{\sum_{j\in J}{x_{ij}}}{\leq 1}, \forall \ i\in I {\label{cst: sum_j = 1}}
    \addConstraint{\sum_{i\in I}\sum_{j\in J}{x_{ij}}}{\geq S}, \forall \ i\in I, \forall j \in J {\label{cst: rbm N}}
    \addConstraint{x_{ij}}{\geq 0}, \forall \ i\in I, \forall j \in J {\label{cst: zero}}
    \addConstraint{x_{ij}}{\leq 1}, \forall \ i\in I, \forall j \in J, {\label{cst: one}}
\end{mini!}
where $ \epsilon$ denotes a regularization term, the objective in Equation \ref{obj: rbm obj} minimizes the total cost of picking up riders. Constraints \ref{cst: sum_i = 1} and \ref{cst: sum_j = 1} ensure each driver serves at most one rider, and each request is assigned to at most one driver. The third constraint \ref{cst: rbm N} provides the lower bound ($S$) of in-service riders.

\textbf{Derivation of the Jacobian Matrix of R-BMP}

We follow the same derivation procedures in Appendix A.4 about deriving the Jacobian Matrix inside the implicit differentiation function. For easier implementation, we flatten the two-dimensional decision variable $\boldsymbol{x}, \boldsymbol{c}$ to a one-dimensional variable $\boldsymbol{z}, \boldsymbol{q}$. Then the corresponding $\frac{\partial \boldsymbol{x}^*}{\partial \boldsymbol{c}^*}$ will be replaced by  $\frac{\partial \boldsymbol{z}^*}{\partial \boldsymbol{q}^*}$.

Then the one-dimensional optimization model is given as follows: 
\begin{mini!}{\boldsymbol{z}}
    {\boldsymbol{q}^T \boldsymbol{z} + \epsilon ||\boldsymbol{z}||_2^2} 
    {\label{eq: original qp}}{\label{obj: obj}}
    \addConstraint{\boldsymbol{A}\boldsymbol{z}}{\leq \boldsymbol{1}} {\label{cst: A}}
    \addConstraint{\boldsymbol{B}\boldsymbol{z}}{\leq \boldsymbol{1}} {\label{cst: B}}
    \addConstraint{\boldsymbol{1^T}\boldsymbol{z}}{\geq S} {\label{cst: N}}
    \addConstraint{\boldsymbol{z}}{\geq \boldsymbol{0}} {\label{cst: 0}}
    \addConstraint{\boldsymbol{z}}{\leq \boldsymbol{1},} {\label{cst: 1}}
\end{mini!}
where $\boldsymbol{q} \in \mathbb{R}^{n^2}$, $\boldsymbol{z} = vec(\boldsymbol{x}) \in \mathbb{R}^{n^2}$, $\boldsymbol{A} \in \mathbb{R}^{n \times n^2}$ is the matrix of sum of columns, $\boldsymbol{B} \in \mathbb{R}^{n \times n^2}$ is the matrix of sum of rows, $\boldsymbol{1} \in \mathbb{R}^{n^2}$, \\
$ \boldsymbol{A} = \left[
            \begin{array}{ccc|c|ccc}
                 1 & 0 \cdots &0 &  & 1 & 0 & \cdots 0 \\
                  0& 1 \cdots &0 & & 0 & 1 & \cdots 0 \\
                 \vdots & \vdots & \vdots & \cdots & \vdots & \vdots & \vdots \\ 
                 0& 0 \cdots &1 && 0 & 0 & \cdots 1 \\
            \end{array}
            \right],$\\
$ \boldsymbol{B} = \left[
            \begin{array}{ccc|c|ccc}
                 1 & 1 \cdots &1 &  & 1 & 1 & \cdots 1 \\
                  0& 0 \cdots &0 & & 0 & 0 & \cdots 0 \\
                 \vdots & \vdots & \vdots & \cdots & \vdots & \vdots & \vdots \\ 
                 0& 0 \cdots &0 && 0 & 0 & \cdots 0 \\
            \end{array}
            \right].$

Give the Lagrangian multiplier function $\mathcal{K}$ with dual variables $\boldsymbol{\lambda}, \boldsymbol{\mu}, \gamma, \boldsymbol{\nu}, \boldsymbol{\eta}$ as:
\begin{align}
\small
    \mathcal{K}(\boldsymbol{z},\boldsymbol{\lambda}, \boldsymbol{\mu}, \gamma, \boldsymbol{\nu}, \boldsymbol{\eta};\boldsymbol{c}) = &\boldsymbol{q}^T \boldsymbol{z}  + \epsilon ||\boldsymbol{z}||_2^2 + \boldsymbol{\lambda}^T(A\boldsymbol{z}-\boldsymbol{1}) \nonumber + \nonumber \\
    & \boldsymbol{\mu}^T(B\boldsymbol{z}-\boldsymbol{1}) + \nonumber \gamma(S - \boldsymbol{1}^T\boldsymbol{z}) - \nonumber \\
    & \boldsymbol{\nu}^T \boldsymbol{z} + \boldsymbol{\eta}^T(\boldsymbol{z} - 1),
\end{align}
where $\boldsymbol{\lambda} \in \mathbb{R}^n, \boldsymbol{\mu} \in \mathbb{R}^n, \gamma \in \mathbb{R}, \boldsymbol{\nu} \in \mathbb{R}^{n^2}, \boldsymbol{\eta} \in \mathbb{R}^{n^2}$. 

Then the KKT conditions are presented as follows:
\begin{align}
    &\nabla_{\boldsymbol{z}} \mathcal{K} = \boldsymbol{q}^T + 2\epsilon \boldsymbol{z} + A^T \boldsymbol{\lambda} + B^T\boldsymbol{\mu} - \gamma \boldsymbol{1} - \boldsymbol{\nu} + \boldsymbol{\eta} = 0 \label{eq: stationarity}\\
    &\boldsymbol{\lambda} \geq 0, \boldsymbol{\mu} \geq 0, \gamma \geq 0, \boldsymbol{\nu}\geq 0, \boldsymbol{\eta}\geq 0  \\
    &\boldsymbol{\lambda}_j (A_j\boldsymbol{z} -1) = 0, \quad \forall \ i \in \{1,2,...,n\} \\
    &\boldsymbol{\mu}_i (B_i\boldsymbol{z} -1) = 0, \quad\forall \ i \in \{1,2,...,n\}  \\   
    &\gamma (S - \boldsymbol{1}^T\boldsymbol{z}) = 0  \\ 
    &\boldsymbol{\nu}_i\boldsymbol{z}_i = 0 , \quad\forall \ i \in \{1,2,...,n^2\}  \\
    &\boldsymbol{\eta}_i(\boldsymbol{z}_i -1 )= 0, \quad \ \forall \ i \in \{1,2,...,n^2\}   \label{eq: kkt final}
\end{align}

Construct the KKT condition from Equation \ref{eq: stationarity} to \ref{eq: kkt final} to an implicit function as $\boldsymbol{\psi}(\boldsymbol{z}, \boldsymbol{\lambda}, \boldsymbol{\mu}, \gamma, \boldsymbol{\nu}, \boldsymbol{\eta}; \boldsymbol{q}) = 0$:
\begin{align}
\label{eq: F function}
    &\boldsymbol{\psi}(\boldsymbol{z}, \boldsymbol{\lambda}, \boldsymbol{\mu}, \gamma, \boldsymbol{\nu}, \boldsymbol{\eta}; \boldsymbol{q})= \nonumber \\
    &\left[ 
    \small
        \begin{array}{c}
             \boldsymbol{q}^T + 2\epsilon \boldsymbol{z} + A^T \boldsymbol{\lambda} + B^T\boldsymbol{\mu} - \gamma \boldsymbol{1} - \boldsymbol{\nu} + \boldsymbol{\eta} = 0 \\
             A\boldsymbol{z} - \boldsymbol{1} \leq 0, B\boldsymbol{z} - \boldsymbol{1} \leq 0, N - \boldsymbol{1}^T\boldsymbol{z} \leq 0, \boldsymbol{z} \geq 0, \boldsymbol{z} \leq1  \\ 
             \boldsymbol{\lambda}, \boldsymbol{\mu}, \gamma, \boldsymbol{\nu}, \boldsymbol{\eta} \geq 0 \\ 
             \boldsymbol{\lambda}_j (A_j\boldsymbol{z} -1) = 0, \  \boldsymbol{\mu}_i (B_i\boldsymbol{z} -1) = 0, \ \gamma (S - \boldsymbol{1}^T\boldsymbol{z}) = 0 \\
            \nu_i\boldsymbol{z}_i = 0, \ \eta_i(\boldsymbol{z}_i -1 )= 0, i = 1,2,...,n^2  
        \end{array} 
    \right]
\end{align}

Differentiate the stationarity and the complementary slackness functions w.r.t. $\boldsymbol{q}$:

\begin{align}
\boldsymbol{M}
    \left[ 
        \begin{array}{c}
            d\boldsymbol{z} \\
            d\boldsymbol{\lambda} \\
            d\boldsymbol{\mu}\\
            d\gamma\\
            d\boldsymbol{\nu}  \\
            d\boldsymbol{\eta}\\
        \end{array}
    \right] = 
    \left[ 
        \begin{array}{c}
            -\boldsymbol{I}  \\
            \boldsymbol{0} \\
            \boldsymbol{0} \\
            0 \\
            \boldsymbol{0} \\
            \boldsymbol{0} \\
        \end{array}
    \right] d\boldsymbol{q},     
\end{align}

\begin{align}
\label{M}
&\boldsymbol{M} = \nonumber \\
  & \left[ 
  \small
  \setlength{\arraycolsep}{0.5pt}
        \begin{array}{cccccc}
            2\epsilon \boldsymbol{I} & \boldsymbol{A}^T & \boldsymbol{B}^T & -\boldsymbol{1} & - \boldsymbol{I} & \boldsymbol{I}   \\
            D(\boldsymbol{\lambda})\boldsymbol{A} & D(\boldsymbol{A}\boldsymbol{z} -\boldsymbol{1}) & \boldsymbol{0} & \boldsymbol{0} & \boldsymbol{0} & \boldsymbol{0} \\
            D(\boldsymbol{\mu})\boldsymbol{B} & \boldsymbol{0} & D(\boldsymbol{B}\boldsymbol{z}-\boldsymbol{1}) & \boldsymbol{0} & \boldsymbol{0} & \boldsymbol{0} \\
            (-\gamma \boldsymbol{1}^T) & 0 & 0 & S - \boldsymbol{1}^T\boldsymbol{z} & 0 & 0 \\
            D(\boldsymbol{\nu}) & \boldsymbol{0} & \boldsymbol{0} & \boldsymbol{0} & D(\boldsymbol{z}) & \boldsymbol{0} \\
            D(\boldsymbol{\eta}) & \boldsymbol{0} & \boldsymbol{0} & \boldsymbol{0} & \boldsymbol{0} & D(\boldsymbol{z}-\boldsymbol{1}) \\
        \end{array}      
    \right],
\end{align}
where $D(\cdot)$ denotes the diagonal matrix of $(\cdot)$, and $\boldsymbol{M} \in \mathbb{R}^{(3n^2+2n+1) \times (3n^2+2n+1)}$.

Then we have:
\begin{align}
\label{eq: rh_MI}
    \renewcommand{\arraystretch}{1.5} 
    \left[ 
        \begin{array}{c}
            \frac{\partial \boldsymbol{z}}{\partial \boldsymbol{q}}  \\
            \frac{\partial \boldsymbol{\lambda}}{\partial \boldsymbol{q}} \\
            \frac{\partial \boldsymbol{\mu}}{\partial \boldsymbol{q}} \\
            \frac{\partial \gamma}{\partial \boldsymbol{q}} \\
            \frac{\partial \boldsymbol{\nu}}{\partial \boldsymbol{q}} \\
            \frac{\partial \boldsymbol{\eta}}{\partial \boldsymbol{q}} \\
        \end{array}
    \right] = -\boldsymbol{M}^{-1}
    \left[ 
    \begin{array}{c}
            \boldsymbol{I}  \\
            \boldsymbol{0} \\
            \boldsymbol{0} \\
            0 \\
            \boldsymbol{0} \\
            \boldsymbol{0} \\
    \end{array}    
\right].
\end{align}

Then $\frac{\partial \boldsymbol{z}^*}{\partial\boldsymbol{q}^*}$ can be obtained by the first $n^2$ rows via Equation \ref{eq: rh_MI}. 

\section{Appendix D. Supplementary Results for Experiments}
This section presents the supplementary results of the two numerical experiments.

\subsection{D.1 Implementation Details}

In the numerical experiments of the two datasets, we implement the predictive model in the R-DFL framework as an MLP with a hidden dimension of 32, using LeakyReLU activation and a dropout rate of 0.1 for regularization. All weights and biases are initialized using the Kaiming uniform method. The learning rate is $1e-3$. All decision variables are continuous. The hyperparameters are set as: the batch size is 8, the optimizer is Adam with weight decay $5e-4$. The dataset is split by $8:1:1$ for training, validation, and testing. The R-DFL framework with explicit unrolling methods is configured with 10 unrolling layers and a convergence tolerance of 0.2 for the implicit differentiation iterations. All the experiments are conducted on a desktop with Intel Core i7-13700K CPU 3.40 GHz $\times$ 32G RAM, 500 GB SSD, GeForce RTX 3090 Ti GPU.

\subsection{D.2 Training Curves}
This section presents the training curves of the R-DFL framework with two differentiation methods across small-scale newsvendor and bipartite matching problems. One can see that both explicit and implicit differentiation methods of R-DFL converge after 50 epochs. 

\begin{figure}[ht]
    \centering
    \includegraphics[width=0.95\linewidth]{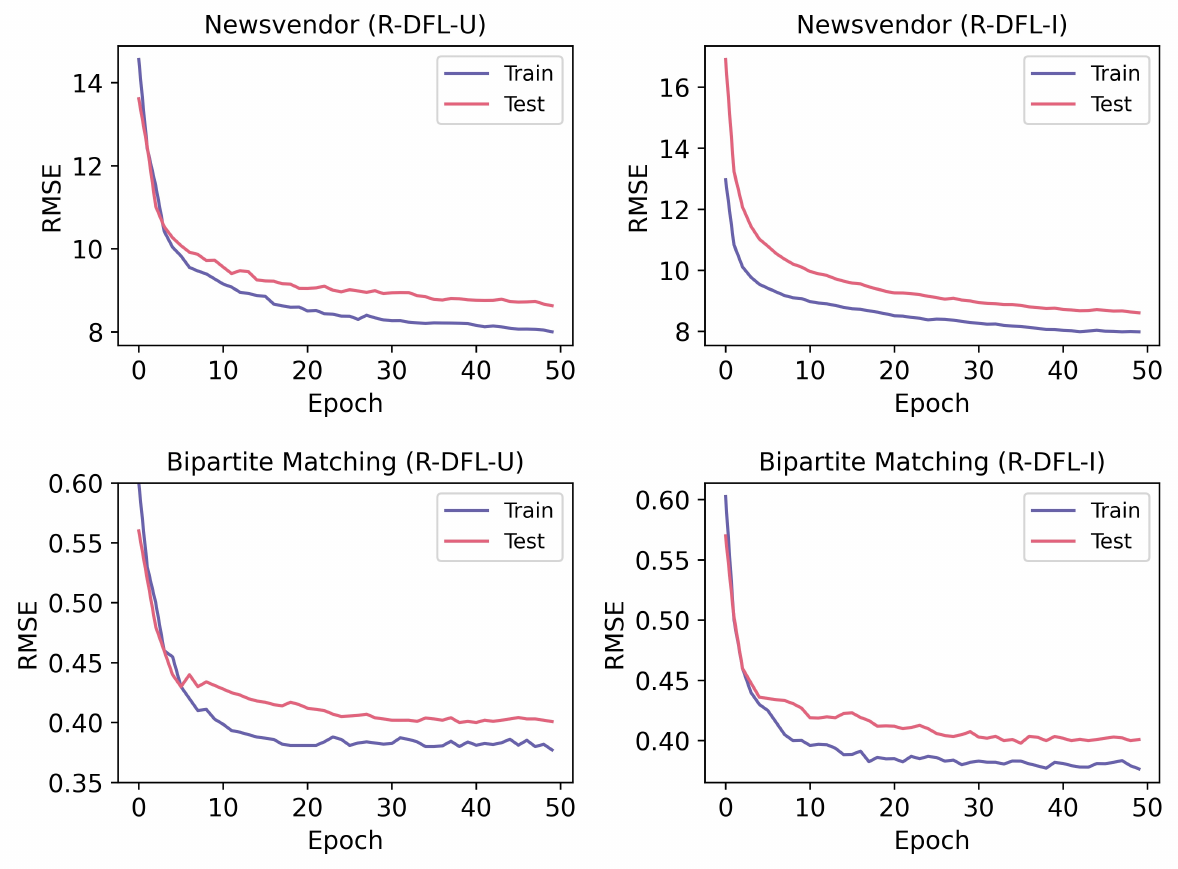}
    \caption{Training curves of R-DFL-U and R-DFL-I.}
    \label{fig:placeholder}
\end{figure}

\subsection{D.3 Performance of Different Predictive Models}
Table \ref{tab: full table} (See page 17) presents the complete accuracy and efficiency comparison of different predictive models of the R-DFL framework and other baselines using a 10-layer unrolling structure across the two datasets. The predictive models are: \texttt{LSTM} \citep{hochreiter1997long}, \texttt{RNN} \citep{elman1990finding}, and \texttt{Transformer} \citep{vaswani2017attention}. The results demonstrate that the R-DFL framework achieves higher accuracy than frameworks that neglect bidirectional feedback (S-DFL and PTO), highlighting the importance of bidirectional structures in closed-loop decision-making. Additionally, R-DFL-I exhibits significantly better computational efficiency than R-DFL-U across nearly all experiments, with relatively lower bias in standard deviation, highlighting the robustness of the proposed implicit differentiation methods.

\clearpage
\begin{landscape}
\begin{table}[h]
    \small 
    \setlength{\tabcolsep}{2 pt} 
    \begin{tabular}{l|cc|cc|cc|cc|cc|cc}
        \toprule
        Dataset & \multicolumn{6}{c|}{\textbf{Newsvendor Problem}} & \multicolumn{6}{c}{\textbf{Bipartite Matching Problem}} \\
        \midrule
        Scalability & \multicolumn{2}{c|}{Small}& \multicolumn{2}{c|}{Mid} & \multicolumn{2}{c|}{Large} & \multicolumn{2}{c|}{Small} & \multicolumn{2}{c|}{Mid} & \multicolumn{2}{c}{Large} \\
        \midrule
        Decision variable& \multicolumn{2}{c|}{10} & \multicolumn{2}{c|}{50}& \multicolumn{2}{c|}{100} & \multicolumn{2}{c|}{16} & \multicolumn{2}{c|}{225}& \multicolumn{2}{c}{900} \\
        Constraints & \multicolumn{2}{c|}{32} & \multicolumn{2}{c|}{152} & \multicolumn{2}{c|}{302} & \multicolumn{2}{c|}{57} & \multicolumn{2}{c|}{706} & \multicolumn{2}{c}{2761} \\
        Jacobian matrix & \multicolumn{2}{c|}{$32\times 32$} & \multicolumn{2}{c|}{$152 \times 152$} & \multicolumn{2}{c|}{$302 \times 302$} & \multicolumn{2}{c|}{$57 \times 57$}& \multicolumn{2}{c|}{$706 \times 706$} & \multicolumn{2}{c}{2761 $\times$ 2761} \\
        \midrule
        \multicolumn{13}{c}{MLP} \\
        \midrule
        Metrics & RMSE & Time & RMSE& Time & RMSE& Time & RMSE & Time & RMSE& Time & RMSE& Time \\
        \midrule
        PTO & 12.771 $\pm$ 0.731   & - & 12.747 $\pm$ 0.808 & -  & 12.684 $\pm$ 0.876 & -   & 0.412 $\pm$ 0.012 & -   & 0.232 $\pm$ 0.016 & -   & 0.190 & -  \\
        S-DFL & 12.245  $\pm$ 0.694 & -  & 12.536  $\pm$ 0.716 & -  & 12.649  $\pm$ 0.912 & -   & 0.408 $\pm$ 0.010 & -   & 0.231 $\pm$ 0.019 & -   & 0.187  $\pm$ 0.015 & -   \\
       \underline{\textbf{R-DFL-U}} & 8.983  $\pm$ 0.494 & 135 $\pm$ 12 & 9.173  $\pm$ 0.302 & 369  $\pm$ 23 & 9.343  $\pm$ 0.470 & 422  $\pm$ 29 & \underline{\textbf{0.396  $\pm$ 0.028}} & 65  $\pm$ 4 & 0.222  $\pm$ 0.020 & 432  $\pm$ 13 & \underline{\textbf{0.170  $\pm$ 0.012 }} &  2704  $\pm$ 10 \\
        \underline{\textbf{R-DFL-I}} & \underline{\textbf{8.831  $\pm$ 0.461}} & \underline{\textbf{118 $\pm$ 10}} & \underline{\textbf{9.106 $\pm$ 0.565 }}  & \underline{\textbf{254 $\pm$ 9}} & \underline{\textbf{9.327 $\pm$ 0.851}}  & \underline{\textbf{369  $\pm$ 22 }} & 0.398  $\pm$ 0.022 & \underline{\textbf{26  $\pm$ 3}}  & \underline{\textbf{0.220  $\pm$ 0.028}} & \underline{\textbf{65} $\pm$ 4} & 0.171  $\pm$ 0.013  & \underline{\textbf{1867 $\pm$ 34}}  \\
        \midrule
        \multicolumn{13}{c}{LSTM } \\
        \midrule
        Metrics & RMSE & Time & RMSE& Time & RMSE& Time & RMSE & Time & RMSE& Time & RMSE& Time \\
        \midrule
        PTO & 13.034 $\pm$ 0.391   & - & 12.964 $\pm$ 0.682 & - & 12.034 $\pm$ 0.535 & -& 0.411 $\pm$ 0.024 & - & 0.218 $\pm$ 0.019 & - & 0.219  $\pm$ 0.013 & - \\
        S-DFL & 13.070  $\pm$ 0.363 & -  & 12.471  $\pm$ 0.727  & - & 12.693 $\pm$ 0.471 & -& 0.421$\pm$ 0.012 & - &  0.265 $\pm$ 0.014 & -  & 0.230  $\pm$ 0.012 & - \\
       \underline{\textbf{R-DFL-U}} & 8.544  $\pm$ 0.407 & 426 $\pm$ 21 & 10.357  $\pm$ 0.524 & 585  $\pm$ 43 & 10.112  $\pm$ 0.753 & 531 $\pm$ 29 & 0.401 $\pm$ 0.039 & 80 $\pm$ 7 & \underline{\textbf{0.172 $\pm$ 0.017}} & 274 $\pm$ 29 & 0.176 $\pm$ 0.011 & 2704 $\pm$ 102 \\
        \underline{\textbf{R-DFL-I}} & \underline{\textbf{8.485  $\pm$ 0.393}} & \underline{\textbf{370 $\pm$ 18}} & \underline{\textbf{10.198  $\pm$ 0.672}} & \underline{\textbf{484  $\pm$ 38}} & \underline{\textbf{10.104 $\pm$ 0.373}} & \underline{\textbf{355 $\pm$ 19}} & \underline{\textbf{0.389 $\pm$ 0.026}}& \underline{\textbf{23 $\pm$ 3}} & 0.174 $\pm$ 0.014 & \underline{\textbf{73 $\pm$ 7}} & \underline{\textbf{0.174 $\pm$ 0.011}} & \underline{\textbf{2093  $\pm$ 87}} \\
        \midrule
        \multicolumn{13}{c}{RNN} \\
        \midrule
        Metrics & RMSE & Time & RMSE& Time & RMSE& Time & RMSE & Time & RMSE& Time & RMSE& Time \\
        \midrule
        PTO & 13.015 $\pm$ 0.698 & - & 12.748 $\pm$ 0.632 & - & 12.842 $\pm$ 1.110 & - & 0.405 $\pm$ 0.021 & - & 0.224 $\pm$ 0.015 & - & 0.185 $\pm$ 0.011 & - \\
        S-DFL & 12.819 $\pm$ 0.307 & -  & 12.961 $\pm$ 0.420 & - & 12.583 $\pm$ 0.939 & - & 0.411 $\pm$ 0.016 & - & 0.224 $\pm$ 0.013 & - & 0.187 $\pm$ 0.027 & - \\
       \underline{\textbf{R-DFL-U}} & 8.950 $\pm$ 0.462 & 621 $\pm$ 34 & 10.987 $\pm$ 0.834 & 680  $\pm$ 36 & 10.872 $\pm$ 0.390 & 510  $\pm$ 46 & \underline{\textbf{0.397 $\pm$ 0.011}} & 102 $\pm$ 11 & 0.176 $\pm$ 0.019 & 300 $\pm$ 27 & 0.176 $\pm$ 0.012 & 2821 $\pm$ 104  \\
        \underline{\textbf{R-DFL-I}} & \underline{\textbf{8.670$\pm$ 0.431}} &\underline{\textbf{431 $\pm$ 25}} &\underline{\textbf{10.976$\pm$ 0.687}} & \underline{\textbf{555 $\pm$ 41}}& \underline{\textbf{10.810$\pm$ 0.692}} & \underline{\textbf{340 $\pm$ 41}} & 0.399 $\pm$ 0.013 & \underline{\textbf{31 $\pm$ 4}} & \underline{\textbf{0.174 $\pm$ 0.019 }} & \underline{\textbf{91$\pm$12 }} & \underline{\textbf{0.174 $\pm$ 0.017 }} & \underline{\textbf{2079 $\pm$ 99 }}\\
        \midrule
        \multicolumn{13}{c}{Transformer} \\
        \midrule
        Metrics & RMSE & Time & RMSE& Time & RMSE& Time & RMSE & Time & RMSE& Time & RMSE& Time \\
        \midrule
        PTO & 12.577 $\pm$ 0.892  & - & 13.694 $\pm$ 1.103 & - & 14.071 $\pm$ 0.998 & - & 0.412 $\pm$ 0.015 & - & 0.224 $\pm$ 0.017 & - & 0.193 $\pm$ 0.017 & - \\
        S-DFL & 12.183 $\pm$ 0.938 & -  & 13.412 $\pm$ 1.006 & - & 14.040 $\pm$ 1.104 & - & 0.413 $\pm$ 0.027 & - & 0.225 $\pm$ 0.014 & - & 0.188 $\pm$ 0.023 & - \\
       \underline{\textbf{R-DFL-U}} & 10.167 $\pm$ 0.841 & 562 $\pm$ 37 & 11.300  $\pm$ 0.719 & 541 $\pm$ 39 & \underline{\textbf{11.231 $\pm$ 0.867}} & 561 $\pm$ 41 & \underline{\textbf{0.405 $\pm$ 0.028}}& 130 $\pm$ 9 & \underline{\textbf{0.179$\pm$ 0.011}} & 293 $\pm$ 15 & 0.172 $\pm$ 0.010 & 2821 $\pm$ 154 \\
        \underline{\textbf{R-DFL-I}} & \underline{\textbf{10.167 $\pm$ 0.754}} & \underline{\textbf{532 $\pm$ 33}}& \underline{\textbf{11.142 $\pm$ 0.710}}& \underline{\textbf{489 $\pm$ 32}} & \underline{\textbf{11.332$\pm$ 0.838}} & \underline{\textbf{360$\pm$25}}& \underline{\textbf{0.407$\pm$ 0.014}}& \underline{\textbf{37$\pm$ 4}} & 0.180 $\pm$ 0.013 & \underline{\textbf{78 $\pm$ 9}} & \underline{\textbf{0.166 $\pm$ 0.008}} & \underline{\textbf{2078 $\pm$ 167}}\\
        \bottomrule
    \end{tabular}
    \caption{Performance of R-DFL framework with explicit unrolling and implicit differentiation methods on the newsvendor and bipartite matching problem with different predictive models. Jacobian size indicates dimension of matrix $J_{\mathcal{G}}|_{\boldsymbol{c}_{i+1}}$). Unit for time: second.}
\label{tab: full table}
\end{table}
\end{landscape}

\end{document}